\documentclass{article}

\PassOptionsToPackage{numbers,sort&compress}{natbib}
\usepackage[preprint]{neurips_2026}

\usepackage[utf8]{inputenc} % allow utf-8 input
\usepackage[T1]{fontenc}    % use 8-bit T1 fonts
\usepackage{hyperref}       % hyperlinks
\usepackage{url}            % simple URL typesetting
\usepackage{booktabs}       % professional-quality tables
\usepackage{amsfonts}       % blackboard math symbols
\usepackage{nicefrac}       % compact symbols for 1/2, etc.
\usepackage{microtype}      % microtypography
\usepackage{xcolor}         % colors

\title{\ourmethod: Ownership Verification for Graph Neural Networks}

\author{%
  Rahul Nandakumar \\
  McCombs School of Business\\
  University of Texas at Austin\\
  Austin, TX \\
  \texttt{rahul.nandakumar@utexas.edu} \\
  \And
  Deepayan Chakrabarti \\
  McCombs School of Business\\
  University of Texas at Austin\\
  Austin, TX \\
  \texttt{deepay@utexas.edu} \\
}

\usepackage{bm, xspace, dsfont}
\usepackage{algorithm, algpseudocode}
\usepackage{xparse}
\usepackage{nicematrix}
\usepackage{amsmath, amsthm}
\usepackage{subcaption}
\usepackage{caption}
\usepackage{xcolor}

\usepackage{multirow}
\usepackage{booktabs}
\usepackage{float}
\usepackage{array}
  
\usepackage{enumitem}
\usepackage{soul}

\usepackage{stfloats}

\sethlcolor{red!30}

\makeatletter
\NewDocumentCommand{\LeftComment}{s m}{%
  \Statex \IfBooleanF{#1}{\hspace*{\ALG@thistlm}}\(\triangleright\) #2}
\makeatother

\DeclareMathOperator*{\argmin}{arg\,min}

\newcommand{\ourmethod}{\textsc{CopyCop}\xspace}

\newcommand{\ualpha}{\ensuremath\underline{\alpha}\xspace}
\newcommand{\oalpha}{\ensuremath\overline{\alpha}\xspace}

\newcommand{\bh}{{\ensuremath{\bm h}}\xspace}
\newcommand{\hbh}{{\ensuremath{\hat{\bm h}}}\xspace}

\newcommand{\bx}{{\ensuremath{\bm x}}\xspace}
\newcommand{\bv}{{\ensuremath{\bm v}}\xspace}
\newcommand{\bw}{{\ensuremath{\bm w}}\xspace}

\theoremstyle{plain}
\newtheorem{theorem}{Theorem}[section]

\newtheorem{lemma}[theorem]{Lemma}

\theoremstyle{definition}
\newtheorem{definition}[theorem]{Definition}
\newtheorem{assumption}[theorem]{Assumption}
\theoremstyle{remark}
\newtheorem{remark}[theorem]{Remark}

\begin{document}

\maketitle

\begin{abstract}
Given two GNNs that output node embeddings, how can we determine if they were trained independently?
An adversary could have trained one GNN specifically to mimic the other GNN's embeddings.
To obscure this relationship between the GNNs, the adversarial GNN might then transform its output embeddings.
The two GNNs could have different architectures, weights, and embedding dimensions, and the adversary can transform the embeddings.
Despite these stringent conditions, our algorithm (named \ourmethod) can identify such copycat GNNs, unlike existing watermarking and fingerprinting methods. 
We also provide theoretical guarantees for \ourmethod.
Finally, experiments on 14 datasets and 5 GNN architectures demonstrate that \ourmethod is accurate and robust against a broad class of adversarial attacks and transformations. Code is available at: \url{https://anonymous.4open.science/r/CopyCop-Graph-Ownership-Verification-8143/README.md}
\end{abstract}

\section{Introduction}
\label{sec:intro}

Graph Neural Networks (GNNs) are increasingly deployed as reusable embedding models, often exposed through Embeddings-as-a-Service~\citep{peng_are_2023}. In this setting, users query a model with graphs and obtain node embeddings that can be reused for downstream tasks. While this enables flexible reuse, it also raises a fundamental question: \emph{can we determine whether another model was independently trained, or derived from a deployed GNN?}

An adversary can train a surrogate GNN to mimic a victim model's embeddings by querying the victim on many graphs and learning from the resulting input-output pairs.
The adversary can then apply transformations such as rotation, scaling, permutation, or changes in embedding dimension. These transformations may substantially change the representation while preserving downstream performance, making the surrogate appear unrelated to the victim. 
Furthermore, the surrogate GNN's architecture can be different from the victim GNN.
Thus, given a victim GNN that outputs embeddings, our goal is to identify surrogate GNNs despite changes in architecture, parameters, embedding dimension, or output transformation.

Existing surrogate detection methods fall into two categories: watermarks and fingerprints. Watermarking methods modify training so that the model produces distinctive outputs on special graphs~\citep{zhao_watermarking_2021, xu_watermarking_2023, pregip}. However, an adversary can bypass such signals through model extraction: by querying the victim on ordinary graphs and training only on the resulting input-output pairs, the surrogate need not reproduce the special watermarked behavior~\citep{lukas_sok_2021}.

Fingerprinting methods avoid modifying the victim model and instead rely on intrinsic properties of its outputs. For GNNs that output embeddings, existing methods typically compare the victim and candidate embeddings directly~\citep{grove}. However, simple transformations such as rotation or changes in embedding dimension can make embeddings appear different to a classifier while preserving downstream accuracy.
As a result, existing embedding-based fingerprints are vulnerable to cosmetic changes in the surrogate representation.

\begin{figure*}[t]
    \includegraphics[width=\textwidth]{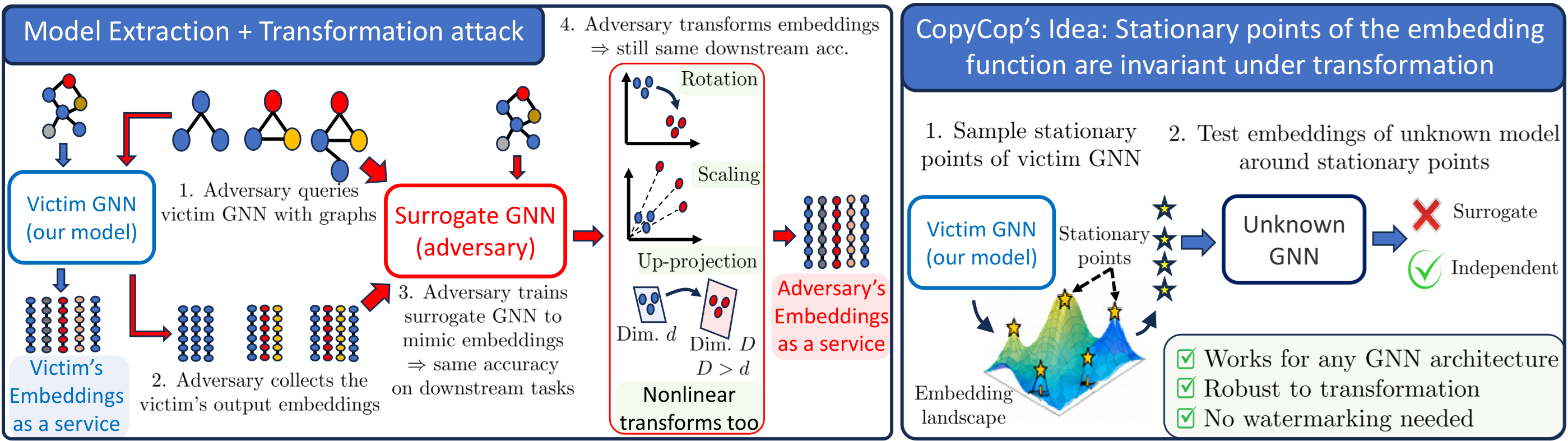}
    \caption{{\em Overview of \ourmethod.:}
    Under the embeddings-as-a-service model, the victim GNN provides embeddings for input graphs, which can then be used for downstream tasks.
    An adversary can train a surrogate GNN to mimic these embeddings and then transform them, achieving similar accuracy while obscuring the surrogate relationship.
    \ourmethod detects surrogate models under such transformations without watermarking the victim GNN.}
    \label{fig:overview}
\end{figure*}

\subsection{Our Contributions}

We propose \ourmethod, a fingerprinting method for GNNs that is robust to a wide range of adversarial transformations, including rotations, scaling, and changes in embedding dimension. Our method is architecture-agnostic: the victim and surrogate GNNs may use different architectures, parameters, and embedding dimensions. Moreover, our fingerprints are randomized and can be regenerated at any time, so leaked fingerprints do not permanently compromise the method. To our knowledge, \emph{\ourmethod is the first fingerprinting method for GNN embeddings that works under a broad class of transformations, across GNN architectures, and at any time.}

The key idea is to use stationary points of the embedding function as fingerprints. A node's embedding is a function of the graph structure and node features. Since an adversary may transform the embedding space, the fingerprint must be invariant to such transformations. We prove that stationary points of the embedding function have this invariance property. We show how to sample stationary points from the victim GNN~$M$ and test whether these points are also stationary for a candidate GNN~$Z$. This test lets us determine, with high confidence, whether $Z$ is independently trained or a surrogate of $M$.

We prove that \ourmethod detects surrogate models under reasonable conditions, irrespective of GNN architecture or embedding dimension. Empirically, we evaluate \ourmethod on 14 datasets and 5 popular GNN architectures. The results show that \ourmethod is robust to model extraction, pruning, fine-tuning, and a broad range of embedding transformations.

The rest of the paper is organized as follows.
We discuss related work in Section~\ref{sec:related}.
In Section~\ref{sec:prop}, we present our proposed method and prove its robustness against embedding transformations.
We empirically validate our approach in Section~\ref{sec:exp}, and conclude in Section~\ref{sec:conc}.
All proofs and extra experiments are deferred to the Appendix.

\section{Related Work}
\label{sec:related}
Our work on ownership verification is related to the broader area of adversarial attacks to infer the graph's structure and properties~\citep{zhang_gnnguard_2020, deng2022garnet, wang_group_2022, zhang_inference_2022}.
Robustness against backdoor attacks is also related to trustworthy AI~\citep{dai_comprehensive_2024} and integrity verification~\citep{8953972, kuttichira2022verification, wang2023publiccheck}, and query-based schemes for GNNs in MLaaS~\citep{wu2024securing}.
Here, we present an overview of the work that is most closely related to us.
%, namely, model-stealing attacks, watermarks, and fingerprints.

%\vspace{-1em}
\textbf{Model stealing attacks and defenses:}
Model stealing attacks and defenses have been studied for images~\citep{papernot_practical_2017} and text~\citep{krishna_thieves_2020}, among others.
One defense is to perturb the model's layers~\citep{lee_defending_2019}, but this fails to counter model extraction attacks~\citep{juuti_prada_2019}.
Other defenses detect model extraction attacks based on their query patterns~\citep{juuti_prada_2019}.
However, we assume that the victim GNN is public, or the adversarial queries can be successfully hidden.
Similar attacks have been proposed for GNNs~\citep{defazio_adversarial_2019, shen_model_2022}, but \citep{lukas_sok_2021} show that current defenses are insufficient.
A related but distinct problem is \emph{integrity verification}: detecting whether a deployed model has been tampered with after deployment~\citep{8953972, kuttichira2022verification, wang2023publiccheck}.
However, we focus on \emph{ownership} verification of a separately trained surrogate that may differ entirely in architecture and weights.

% Model stealing and defenses for DNN:
% For images (Papernot/2017), Text (Krishan/2020)
% Seeking patterns in adversarial queries used for model extraction (PRADA)
% Perturb the model's final activation layer, slightly altering the output probabilities. (Lee/2019) but model extraction still works (PRADA)
% Model stealing attacks on GNNs: 
% \citep{defazio_adversarial_2019, shen_model_2022}
% Survey of many attacks and defenses against them (SOK Lukas/2021)

%\vspace{-1em}
\textbf{Watermarks:}
Watermarks can be embedded in the victim model's weights by regularizing the model parameters during training~\citep{uchida_embedding_2017}.
Other approaches embed backdoors, training the model to return improbable outputs for special inputs~\citep{adi_turning_2018, darvish_rouhani_deepsigns_2019, kwon_blindnet_2022, zhao_recipe_2023}.
Backdoor-based watermarks have also been proposed for GNNs~\citep{zhao_watermarking_2021, xu_watermarking_2023, pregip}.
However, watermarks can degrade performance~\citep{grove}, and are susceptible to model extraction attacks~\citep{lukas_sok_2021}.
% \txtred{\citep{chen2021you} propose lottery-ticket-based ownership verification for DNNs.
% Their white-box schemes (V1/V2) require the surrogate to share the victim's weight mask, which does not hold when the surrogate is a separately trained GNN with a different architecture.
% Their black-box scheme (V3) is based on watermarks, which are again susceptible to model extraction attacks.}

%Lottery-ticket-based ownership verification~\citep{chen2021you} requires the surrogate to share the victim's weight mask (white-box) or relies on watermarks (black-box), both of which are ineffective against model extraction with a different architecture.
%We empirically show that \ourmethod is more robust to model extraction and embedding transformations.

% Embed in DNN by regularizing model params (Uchida)
% Backdoor samples (Kwon/Kim/2022, DeepSigns/Rouhani/2019, Adi/2018)
% Watermarks for Diffusion models (Zhao/2023)
% Embed Random graphs for GNNs targeting node classification (Zhao/2021), extend to graph classificatio (Xu/2023), then for any task (PreGIP)

% Watermarking has been shown to degrade model accuracy\citep{grove}
% model extraction attacks are the most effective removal attacks against a majority of watermarks (SOK Lukas/2021)

%\vspace{-1em}
\textbf{Fingerprints:}
Unlike watermarks, fingerprints do not modify the victim model.
Fingerprints can be based on model weights~\citep{chen_deepmarks_2019}, special inputs for which some surrogate and independent models give very different outputs~\citep{lukas_deep_2021}, pairs of samples along special vectors that straddle decision boundaries~\citep{peng_fingerprinting_2022}, or confidence levels of surrogate models~\citep{maini_dataset_2021}.
% But our problem setting does not have a classification task.
% Another approach is based on the idea that surrogate models are more confident about points in the victim's training set~\citep{maini_dataset_2021}.
For GNNs, existing methods rely on the similarity of embeddings between victim and surrogate models~\citep{grove}. 
This approach applies only when the victim and surrogate GNNs have the same embedding dimension, limiting its generality.
In contrast, our approach works for surrogates with any GNN architecture or embedding dimension.

\section{Proposed Method}
\label{sec:prop}

% A graph is a tuple $(G, X)$ where $G$ represents the links between $|G|$ nodes, each node~$i$ has features $\bx_i\in\mathbb{R}^D$, and $X=\begin{bmatrix}\bx_1, \ldots, \bx_{|G|}\end{bmatrix}$.
% We are given a GNN model $M$ (the ``victim'' model) that takes any $(G, X)$ as input and outputs a node embedding $\bh_i\in\mathbb{R}^d$ for each node~$i$.
% Each node's embedding depends on the entire graph, not just on that node's features, i.e., $\bh_i:=\bh_i(G, X; M)$.
% To simplify notation, we will refer to $\bh_i(G,X;M)$ as $\bh_i(X)$ when the context is clear.

% A graph is a tuple $(G, X)$ where $G = (V, E)$ represents the links between $n := |V|$ nodes, each node $i$ has features $\bx_i \in \mathbb{R}^D$, and $X = \bigl(\bx_1, \ldots, \bx_n\bigr)$. 
% Let $\mathcal{D}$ denote the data distribution over graph--feature pairs $(G, X)$.
% A pair $(G, X)$ is {\em feasible} if lies within the support of $\mathcal{D}$.
% A $d$-dimensional GNN $H$ is a function $\bh:(G,X)\to\mathbb{R}^{d\times n}$.
% The $i^{th}$ column of the matrix $\bh(G,X; H)$ is called the {\em embedding} of node~$i$ and is denoted by $\bh_i(G,X; H)\in\mathbb{R}^d$.
A graph is a tuple $(G, X)$ where $G = (V, E)$ has $n := |V|$ nodes, each with features
$\bx_i \in \mathbb{R}^D$, and $X = (\bx_1, \ldots, \bx_n)$. Let $\mathcal{D}$ denote the data
distribution over feasible pairs $(G, X)$. A $d$-dimensional GNN $H$ is a function
$\bh:(G,X)\to\mathbb{R}^{d\times n}$, whose $i^{th}$ column $\bh_i(G,X;H)\in\mathbb{R}^d$ (or $\bh_i(X)$ when
context is clear) is the {\em embedding} of node $i$.
We are given a $d$-dimensional {\em victim} GNN $M$ whose embeddings have the following property.

%with node embeddings $\bh_i(G, X; M)\in \mathbb{R}^d$

% To simplify notation, we will refer to $\bh_i(G, X; H)$ as $\bh_i(X)$ or $\bh_i$ when the context is clear.

% A $d$-dimensional GNN is a function that takes a feasible $(G,X)$ as input and outputs a $d\times n$ matrix whose $i^{th}$ column is a $d$-dimensional vector called the {\em embedding} of node $i\in \{1, \ldots, n\}$.

%To simplify notation, we will refer to $\bh_i(G, X; M)$ as $\bh_i(X)$ or $\bh_i$ when the context is clear.
%We assume the following.

% \txtred{
% We are given a GNN model $M$ (the ``victim'' model) that takes any $(G, X)$ as input and outputs a $d$-dimensional node embedding $h_i(G, X; M) \in \mathbb{R}^d$ for each node $i\in\{1, \ldots, n\}$. 
% To simplify notation, we will refer to $\bm{h_i}(G, X; M)$ as $\bm{h_i}(X)$ when the context is clear. 
% }

% Each node's embedding depends on the entire graph, not just on that node's features, i.e., $\bm{h_i} := \bm{h_i}(G, X; M)$. 

\begin{assumption}[Embedding properties]
    $\bh_i(X)$ is twice-differentiable with bounded Hessian, and there exist constants $\ualpha_M$ and $\oalpha_M$ such that 
    $0< \ualpha_M \leq \|\bh_i\| \leq \oalpha_M$ 
    for all feasible $G$, $X$, and~$i$.
    \label{A:bhi}
\end{assumption}

Next, we see a $d'$-dimensional GNN model $Z$, possibly using a different architecture and dimension than $M$.
We can query $Z$ with a few graphs and observe its outputs, but we have no access to $Z$'s internals.
Our goal is to {\em determine whether GNN $Z$ is an independently trained model or a surrogate of the victim GNN $M$.} We define a surrogate model as follows.

\begin{definition}[Surrogate model]
A $d'$-dimensional GNN~$M^\prime$, with embeddings $\bh_i^\prime:=\bh_i(G, X; M^\prime)\in\mathbb{R}^{d'}$, is a surrogate of GNN $M$ if {\bf there exist}
functions $\phi:\mathbb{R}^d\to\mathbb{R}^{d'}$ and $\hbh:(G,X)\to\mathbb{R}^{d\times n}$
%, and a small $\epsilon\geq 0$ (specified later)
such that
%, for every $(G,X)$,
%feasible $(G,X)$,
%a function $\phi:\mathbb{R}^d\to\mathbb{R}^{d'}$ and vectors $\hbh_i:=\hbh_i(G,X; M^\prime)\in\mathbb{R}^d$ for every feasible $(G,X,i)$ such that
\begin{align}
    \sup_{G, X, i} & \|\hbh_i(G, X) - \bh_i(G, X; M)\| \leq \epsilon, &
    \bh_i^\prime &= \phi\left(\hbh_i\right),
    \label{eq:epsilon}
\end{align}
for every graph $(G,X)$ that is feasible or is in a $\delta$-neighborhood of a feasible graph ($\epsilon$ and $\delta$ are small positive numbers specified later).
% where  $\epsilon\geq 0$ is small and specified later.
%Note that $M^\prime$ outputs $\bh_i^\prime$, not $\hbh_i$; we only assume that $\hbh_i$ exist.
\label{defn:surrogate}
%\vspace{-1em}
\end{definition}

For intuition, the adversary can query the victim model~$M$ on graphs $(G,X)$, collect the resulting embeddings $\bh_i(G,X;M)$ for all nodes, and train a GNN whose embeddings $\hat{\bh}_i$ approximate~$\bh_i$. The adversary may then apply an additional transformation layer that outputs~$\phi(\hat{\bh}_i)$.

Alternatively, the adversary can train $M'$ end-to-end using $(G,X)$ as input and $\phi(\bh_i)$ as the target output.
Even though the adversary never explicitly constructs an intermediate representation $\hat{\bh}_i$, it is still useful to characterize the connection between $\bh_i$ and $\bh_i^\prime$.

In both cases, the victim and surrogate models output embeddings $\bh_i \in \mathbb{R}^d$ and $\bh_i' \in \mathbb{R}^{d'}$, respectively, and our algorithm relies only on these outputs. The link between the two is given by the (unobserved) $\hat{\bh}_i$ and $\phi(\cdot)$ introduced in Definition~\ref{defn:surrogate}.
Their existence is used solely in our analysis.

\smallskip
\begin{remark}
The sup-norm in Equation~\ref{eq:epsilon} can be relaxed to the supremum over a subset of graphs $(G,X)$ with a high probability under the data distribution.
Furthermore, we only need Assumption~\ref{A:bhi} to hold at around our query tuples, defined later.
We keep this form to simplify the exposition.
%We keep the form of Equation~\ref{eq:epsilon} to simplify the exposition.
\end{remark}

Next, we motivate our proposed method on the special case of no reconstruction error ($\epsilon=0$) before discussing the general case.

\subsection{The Perfect Reconstruction Case}
\label{sec:prop:perfect}
Here, $\hbh_i=\bh_i$, so $M^\prime$ outputs embeddings $\bh_i^\prime=\phi(\bh_i)$.
Not all choices of $\phi(\cdot)$ are realistic.
For example, $\phi(\bh)=\bm{0}$ would lead to useless embeddings.
More generally, if $\phi(\bh_1)=\phi(\bh_2)$ for $\bh_1\neq\bh_2$, then $\phi(\cdot)$ loses information.
This can hurt performance on downstream tasks, so the adversary will avoid such a $\phi(\cdot)$.

% \begin{assumption}[No information loss]
% The transformation $\phi(\cdot)$ is differentiable and invertible, that is, for any unit vector $\bw\in\mathbb{R}^d, \|\bw\|=1$ and any $\bh\in\mathbb{R}^d$,
% $
%     \left.\nabla_{\bw}\phi\right|_\bh := \lim_{t\to 0} \frac{\phi(\bh+t\bw) - \phi(\bh)}{t} \neq \bm{0}.
% $
% \label{A:invertible}
% \end{assumption}

\begin{assumption}[Local invertibility]
The transformation $\phi(\cdot)$ is differentiable and locally invertible:
\begin{align}
    \left.\nabla_{\bw}\phi\right|_\bh \; &:=\; \lim_{t\to 0} \frac{\phi(\bh+t\bw) - \phi(\bh)}{t} \;\neq\; \bm{0} &\forall
\bh\in\mathbb{R}^d, \bw\in\mathbb{R}^d, \|\bw\|=1.
\end{align}
\label{A:invertible}
\end{assumption}

Assumption~\ref{A:invertible} ensures that $\phi(\bh_1)\neq\phi(\bh_2)$ for any two close embeddings $\bh_1\neq\bh_2$.
The assumption holds for many intuitive transformations, such as rotation, non-zero scaling, projection to a higher dimension, and translation, and for any composition of such functions (Lemma~\ref{lem:closure} in the Appendix).
Note that projecting to a lower dimension loses information, so the adversary will avoid $d'<d$.

% For instance, $\phi(\bh)=Qh$ with non-singular $Q^TQ$ satisfies the assumption.
% This includes rotations of $\bh$, non-zero scaling of the components of $\bh$, and projection to a higher dimension.
% Translations $(\phi(\bh)=\bh+\bm{c})$ also satisfy it.

% \begin{lemma}[Closure under composition]\label{lem:closure}
% If $f: \mathbb{R}^d \to \mathbb{R}^{d'}$ and $g: \mathbb{R}^{d''} \to \mathbb{R}^d$ each satisfy Assumption~\ref{A:invertible}, then $f \circ g$ also satisfies Assumption~\ref{A:invertible}.
% \end{lemma}
 
% \begin{proof}
% Let $w$ be any unit vector and $h$ any point in the domain of $g$.
% By the chain rule,
% \[
%   \nabla_w (f \circ g)\big|_h
%   = J_f\big|_{g(h)} \cdot J_g\big|_h \, w
%   = J_f\big|_{g(h)} \cdot \nabla_w g\big|_h.
% \]
% Since $g$ satisfies Assumption~\ref{A:invertible}, $\nabla_w g\big|_h \neq 0$.
% Let $v := \nabla_w g\big|_h$; then $v \neq 0$, so $v / \|v\|$ is a unit vector.
% Since $f$ satisfies Assumption~\ref{A:invertible},
% \[
%   J_f\big|_{g(h)} \, v
%   = \|v\| \cdot \nabla_{v/\|v\|} f\big|_{g(h)}
%   \neq 0.
% \]
% Hence $\nabla_w (f \circ g)\big|_h \neq 0$, so $f \circ g$ satisfies Assumption~\ref{A:invertible}.
% \end{proof}
% \txtblue{(RN: Moved proof to appendix)}

We need a fingerprint that is invariant under any transformation satisfying Assumption~\ref{A:invertible}.
We will show that this strong requirement is met by the stationary points of $\bh_i(X)$, as defined below.

\begin{definition}
Let $F$ be any GNN.
We define the {\em query tuples} $\mathcal{Q}$, {\em directional derivative} $\nabla_{\bw}\bh_i(G,X;F)$ of an embedding at a query tuple, and the {\em stationary points} $\mathcal{S}(F)$ of GNN~$F$:
    \begin{align}
    \mathcal{Q} &= \bigl\{ (G, X, i, \bw) \mid (G,X) \in \mathrm{supp}(\mathcal{D}),\; 
    i \in \{1,\ldots,n\},\; \bw \in \mathbb{R}^D,\; \|\bw\| = 1 \bigr\}\nonumber\\
    \bh_i^{(\tau\bw)}(G,X;F) &= \bh_i\left(G, \begin{bmatrix}\bx_1, \ldots, \bx_{i-1}, \bx_i+\tau\bw, \bx_{i+1}, \ldots, \bx_{n}\end{bmatrix}; F\right),\nonumber\\
    \nabla_{\bw}\bh_i(G,X;F) &= \lim_{\tau\to 0} \frac{\bh_i^{(\tau\bw)}(G,X;F) - \bh_i(G,X;F)}{\tau} \quad\forall (G,X,i,\bw)\in\mathcal{Q}\nonumber\\ %\label{eq:hdelta} \\    
    %&\quad\text{ and } (G,X,i,\bw)\in\mathcal{Q}\nonumber\\
    \mathcal{S}(F) &= \bigl\{ (G, X, i, \bw) \in \mathcal{Q} \mid \|\nabla_{\bw} \bh_i(G, X; F)\| = 0 \bigr\}.\nonumber %  \label{defn:stationary}
    \end{align}
    We will refer to $\nabla_{\bw}\bh_i(G,X;F)$ as $\nabla_{\bw}\bh_i(X)$ when the context is clear.
\label{defn:stationary}
\end{definition}

A stationary point is a graph $G$ with features $X$ such that nudging a node's features $\bx_i$ in the direction $\pm{\bw}$ does not change its embedding $\bh_i$ to first order.
But if $\bh_i$ is unaffected, so is $\phi(\bh_i)$, {\em for any $\phi(\cdot)$.}
Thus, this point is also stationary for the surrogate model.
The next lemma formalizes this property.

\begin{lemma}
Let $\bw\in\mathbb{R}^D, \|\bw\|=1$, and $\bh_i^\prime=\phi(\bh_i)$ for some $\phi(\cdot)$ satisfying Assumption~\ref{A:invertible}.
Then, $\|\nabla_{\bw}\bh_i(X)\|=\bm{0}$ if and only if $\|\nabla_{\bw}\bh_i^\prime(X)\|=\bm{0}$.
\label{lem:stat_invariant}
\end{lemma}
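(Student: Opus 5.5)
The plan is to prove the lemma with the chain rule, applied to the composition $\tau\mapsto \bh_i^{\prime(\tau\bw)}(X)=\phi\bigl(\bh_i^{(\tau\bw)}(X)\bigr)$. Let $\gamma(\tau):=\bh_i^{(\tau\bw)}(X)\in\mathbb{R}^d$. By Assumption~\ref{A:bhi}, $\gamma$ is differentiable at $\tau=0$ with $\gamma'(0)=\nabla_{\bw}\bh_i(X)$. Since $\phi$ is differentiable (Assumption~\ref{A:invertible}), the chain rule gives
\begin{align*}
\nabla_{\bw}\bh_i^\prime(X) \;=\; \lim_{\tau\to 0}\frac{\phi(\gamma(\tau))-\phi(\gamma(0))}{\tau} \;=\; J_\phi\big|_{\bh_i}\,\nabla_{\bw}\bh_i(X),
\end{align*}
where $J_\phi|_{\bh_i}$ is the Jacobian of $\phi$ at $\bh_i(X)$. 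Here I take ``differentiable'' in Assumption~\ref{A:invertible} to mean Fréchet differentiable, so that directional derivatives are linear in the direction: $\nabla_{\bv}\phi|_{\bh}=J_\phi|_{\bh}\bv$.

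\emph{Forward direction.} If $\nabla_{\bw}\bh_i(X)=\bm{0}$, then the displayed identity gives $\nabla_{\bw}\bh_i^\prime(X)=J_\phi|_{\bh_i}\bm{0}=\bm{0}$. This direction uses only differentiability of $\phi$, not its local invertibility.

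\emph{Reverse direction.} I argue by contrapositive. Suppose $\bv:=\nabla_{\bw}\bh_i(X)\neq\bm{0}$, and let $\bm{u}:=\bv/\|\bv\|$, a unit vector in $\mathbb{R}^d$. By linearity,
\begin{align*}
J_\phi|_{\bh_i}\bv=\|\bv\|\,\nabla_{\bm{u}}\phi|_{\bh_i}.
\end{align*}
Assumption~\ref{A:invertible} says $\nabla_{\bm{u}}\phi|_{\bh_i}\neq\bm{0}$. Hence $\nabla_{\bw}\bh_i^\prime(X)\neq\bm{0}$, so stationarity for $\bh_i^\prime$ forces stationarity for $\bh_i$. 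This is the same rescaling trick used in the closure argument of Lemma~\ref{lem:closure}.

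The main obstacle is justifying the chain rule. Assumption~\ref{A:invertible} literally only asserts that Gateaux directional limits exist and are nonzero, and that is not enough for the chain rule along the curve $\gamma$. I will therefore state explicitly that differentiability of $\phi$ means Fréchet differentiability, which is the natural reading and holds for all the example transformations (rotation, scaling, higher-dimensional projection, translation). Under that reading the rest of the argument is immediate.
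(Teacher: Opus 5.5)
Your proposal is correct and matches the paper's proof: the chain rule $\nabla_{\bw}\bh_i^\prime(X)=J_\phi|_{\bh_i}\nabla_{\bw}\bh_i(X)$ gives the forward direction, and the rescaling $J_\phi|_{\bh_i}\bv=\|\bv\|\,\nabla_{\bv/\|\bv\|}\phi|_{\bh_i}$ together with Assumption~\ref{A:invertible} gives the converse. Your remark that ``differentiable'' must be read as Fréchet differentiable, so that the Jacobian exists and the chain rule applies, states explicitly what the paper assumes implicitly when it writes $J^\phi$.
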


This suggests the following algorithm: given the victim GNN~$M$ and the unknown GNN~$Z$, test if $\mathcal{S}(M)=\mathcal{S}(Z)$.
By Lemma~\ref{lem:stat_invariant}, the test succeeds if $Z$ is a surrogate of~$M$.
But we expect the test to fail if $Z$ is an independent model, since every such model should converge to a different local optimum.
Now, we observe empirically that $\mathcal{S}(M)$ and $\mathcal{S}(Z)$ are large sets, making direct comparison difficult.
Hence, we use a sampling-based approach, as follows.

% By Lemma~\ref{lem:stat_invariant}, $\mathcal{S}(M)=\mathcal{S}(M^\prime)$ for any surrogate model $M^\prime$, {\em irrespective of the transform $\phi(\cdot)$ used in $M^\prime$.}
% So, for any model~$Z$, we can test if the points in $\mathcal{S}(M)$ are also stationary points of~$Z$.
% If $Z$ is a surrogate of~$M$, all the tests will succeed.
% However, $\mathcal{S}(M)\neq \mathcal{S}(Z)$ if $Z$ is an independent model, since every such model should converge to a different local optimum.
% This motivates the following approach.

% Now, we observe empirically that $\mathcal{S}(M)$ and $\mathcal{S}(Z)$ are large sets, making direct comparison difficult.
% Instead, we assume that we can sample from $\mathcal{S}(M)$.
% For simplicity, we also assume here that we can compute $Z$'s gradients.
% We will relax this assumption in Section~\ref{sec:prop:general}, where we also discuss how we sample from $\mathcal{S}(M)$.
% Next, we present our Algorithm~\ref{alg:special} for surrogate detection.
% %and formally state our assumption.

\begin{algorithm}
    \caption{DetectSurrogate (Special case)}
    \begin{algorithmic}[1]
    \item[{\bf Input:}] GNNs $M$ and $Z$
    \State Draw independent samples $T$ from $\mathcal{S}(M)$
    \State Check if all points in $T$ are stationary for $Z$
    \State \Return {\em Surrogate} if $T\subseteq \mathcal{S}(Z)$ else {\em Independent}
    \end{algorithmic}
    \label{alg:special}
\end{algorithm}

In Section~\ref{sec:prop:general}, we will show how we sample from $\mathcal{S}(M)$.
We will also remove the need to compute $Z$'s gradients.
Next, we formalize the assumption of differences between the stationary points of the victim and any independent model, and use it to prove the algorithm's correctness.

%, we assume that if we sample a stationary point from $\mathcal{S}(M)$, then with probability at least $\gamma_M$ this point is not also a stationary point for the independent model $I$.

% We will relax this assumption in Section~\ref{sec:prop:general}, where we also discuss how we sample from $\mathcal{S}(M)$.
%, and at least a $\gamma_M$ fraction of these stationary points are unique to $M$ and not stationary for any independent model.

\begin{assumption}[Idiosyncratic stationary points]
 (a) $\mathcal{S}(M) $ is equipped with a probability measure $\mu_M$ from which we can draw samples.
 (b) There exists $\gamma_M > 0$ such that for any independently trained model $I$, $\mu_M\!\bigl(\mathcal{S}(M) \setminus \mathcal{S}(I)\bigr) \;\geq\; \gamma_M.$
\label{A:idiosyncratic}
\end{assumption}
% Here, $\mu_M\!\bigl(\mathcal{S}(M) \setminus \mathcal{S}(I)\bigr)$ denotes the $\mu_M$-measure of the subset of stationary points of $M$ that are \emph{not} stationary points of $I$.

Informally, if we sample a stationary point from $\mathcal{S}(M)$ according to $\mu_M$, then with probability at least $\gamma_M$ this point is not also a stationary point for the independent model $I$.
% For instance, if $\mathcal{S}(M)$ were finite and $\mu_M$ were the uniform measure on $\mathcal{S}(M)$, then $\mu_M(\mathcal{S}(M) \setminus \mathcal{S}(I)) = |\mathcal{S}(M) \setminus \mathcal{S}(I)| / |\mathcal{S}(M)|$.
 
% Note that we do \emph{not} require $\mathcal{S}(M)$ to be countable.

% Assumption~\ref{A:idiosyncratic} was motivated by our exploratory analysis on real datasets, and the strong empirical performance of \ourmethod (Section~\ref{sec:exp}) suggests that it is appropriate.
% Using this, we prove the correctness of Algorithm~\ref{alg:special}.

% Next, we show that Algorithm~\ref{alg:special} is correct with high probability when the number of samples $|T|$ is sufficiently large.
% First, we formally state our assumptions.
\begin{theorem}
    If $\epsilon=0$ and Assumptions~\ref{A:bhi},~\ref{A:invertible}, and~\ref{A:idiosyncratic} hold, then Algorithm~\ref{alg:special} is correct with probability at least $1-e^{-2|T|\gamma_M^2}$.
    \label{thm:special}
\end{theorem}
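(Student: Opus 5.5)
We split the argument according to whether $Z$ is a surrogate or an independent model, since Algorithm~\ref{alg:special} can err in a different way in each case.

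\textbf{Case 1: $Z$ is a surrogate of $M$.} With $\epsilon=0$ we have $\hbh_i=\bh_i$, so $Z$ outputs $\bh_i^\prime=\phi(\bh_i)$ with $\phi$ satisfying Assumption~\ref{A:invertible}. Take any sampled $(G,X,i,\bw)\in T\subseteq\mathcal{S}(M)$. Then $\|\nabla_{\bw}\bh_i(X)\|=0$. Assumption~\ref{A:bhi} guarantees that this directional derivative exists. Lemma~\ref{lem:stat_invariant} then gives $\|\nabla_{\bw}\bh_i^\prime(X)\|=0$, so the tuple lies in $\mathcal{S}(Z)$. Hence $T\subseteq\mathcal{S}(Z)$ deterministically, and the algorithm returns \emph{Surrogate} with probability one. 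Strictly, only the direction ``$\bh_i$ stationary $\Rightarrow$ $\phi(\bh_i)$ stationary'' is needed here, and that is just the chain rule.

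\textbf{Case 2: $Z=I$ is independently trained.} The algorithm errs only if every sample lands in $\mathcal{S}(M)\cap\mathcal{S}(I)$. Let $X_k$ be the indicator that the $k$-th sample lies in $\mathcal{S}(M)\setminus\mathcal{S}(I)$. By Assumption~\ref{A:idiosyncratic}, the $X_k$ are i.i.d.\ Bernoulli with mean $p\ge\gamma_M$. The error event is $\sum_k X_k=0$, which has probability exactly $(1-p)^{|T|}\le(1-\gamma_M)^{|T|}\le e^{-\gamma_M|T|}$.

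To match the stated constant, apply Hoeffding's inequality to the empirical mean $\bar X=\frac{1}{|T|}\sum_k X_k$. Since $\{\bar X=0\}\subseteq\{\bar X - p\le -\gamma_M\}$, we get
\begin{align}
\Pr\bigl[\bar X - p\le-\gamma_M\bigr]\le e^{-2|T|\gamma_M^2}.
\end{align}
This bound is looser than the direct computation, because $e^{-\gamma}\le e^{-2\gamma^2}$ whenever $\gamma\le 1/2$. When $\gamma_M>1/2$, the direct bound $(1-\gamma_M)^{|T|}$ is still at most $e^{-2|T|\gamma_M^2}$, since Hoeffding's inequality holds for all $\gamma_M$.

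Combining the two cases, the algorithm is correct with probability at least $1-e^{-2|T|\gamma_M^2}$ whichever type $Z$ is.

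The main obstacle is only a matter of care with measurability and independence. The samples in $T$ must be drawn i.i.d.\ from $\mu_M$, and $\mathcal{S}(M)\setminus\mathcal{S}(I)$ must be measurable, which we would take as implicit in Assumption~\ref{A:idiosyncratic}(b). We must also note that the check in line 2 is exact in the $\epsilon=0$ setting with gradient access.
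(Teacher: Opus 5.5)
Your proof is correct and follows the paper's argument: a surrogate always passes by Lemma~\ref{lem:stat_invariant}, and an independent model is misclassified only if all $|T|$ i.i.d.\ draws from $\mu_M$ land in $\mathcal{S}(I)$, which has probability at most $(1-\gamma_M)^{|T|}$. Your added Hoeffding step is a small improvement over the paper, whose proof stops at $e^{-\gamma_M|T|}$; that bound implies the stated $e^{-2|T|\gamma_M^2}$ only when $\gamma_M\le 1/2$, whereas your argument (equivalently, $1-\gamma\le e^{-2\gamma^2}$ for $\gamma\in[0,1]$) covers every $\gamma_M$.
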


\subsection{General Case}
\label{sec:prop:general}

%Here, the surrogate model $M^\prime$ outputs node embeddings $\phi(\hbh_i)$ where $\hbh_i$ is a reconstruction of the embedding $\bh_i$ from the victim model $M$.
Now, $\hbh_i(X)$ can be any function within a band of width $\epsilon$ around $\bh_i(X)$ (Definition~\ref{defn:surrogate}).
Hence, its stationary points may differ from $M$.
The function $\hbh_i(X)$ could even be discontinuous, so we cannot assume that its directional derivatives exist.
So we cannot directly check for stationary points in Algorithm~\ref{alg:special}.
Instead, we approximate the stationarity checks, as shown below.

\begin{definition}
Let $F$ be any GNN model.
For a tuple $t=(G, X, i, \bw) \in \mathcal{Q}$ and $\delta>0$, define,
\begin{align}
    q_F(t) &= \frac{\|\bh_i^{(\delta\bw)}(X) - \bh_i(X)\|}{\|\bh_i(X)\|}, &
    \beta_F &= \frac{E_{t\sim\mu_M} q_F(t)}{E_{t\sim\mathcal{D}_{exp}} q_F(t)},
     \label{eq:betaz}
    %\label{eq:q}
\end{align}
where $\bh_i^{(\delta\bw)}(X)$ depends on $t$ as defined in Definition~\ref{defn:stationary},
\begin{align*}
  (G, X) &\sim \mathcal{D}, &
  i \mid G, X &\sim \mathrm{Uniform}(1, \ldots, n),&
  \bw \mid G, X, i &\sim \mathrm{Uniform}(\{v \in \mathbb{R}^D,\, \|v\| = 1\}).
\end{align*}
\label{defn:randomchoice}
\end{definition}
%the embedding of node~$i$ after a $\delta$-change in its features in the direction $\bw$

% We define the distribution $\mathcal{D}_{\mathrm{exp}}$ over the query tuples $\mathcal{Q}$ (Definition~\ref{defn:stationary}), and the ratio $\beta_F$ for any model~$F$ as follows:
% \begin{align*}
%   (G, X) &\sim \mathcal{D}, &
%   i \mid G, X &\sim \mathrm{Uniform}(1, \ldots, n),&
%   \bw \mid G, X, i &\sim \mathrm{Uniform}(\{v \in \mathbb{R}^D,\, \|v\| = 1\}),
% \end{align*}
% \label{defn:randomchoice}
% \end{definition}
% Using Definition~\ref{defn:randomchoice}, we define the ratio
% \begin{align}
%     \beta_F := \frac{E_{t\sim\mu_M} q_F(t)}{E_{t\sim\mathcal{D}_{exp}} q_F(t)},
%     \label{eq:betaz}
% \end{align}
% where $F$ is any model and $\mu_M$ is the probability measure over $\mathcal{S}(M) $ (Assumption~\ref{A:idiosyncratic}).

% Let $F$ be any GNN model.
% For a tuple $t=(G, X, i, \bw) \in \mathcal{Q}$ and $\delta>0$, define,
% \begin{align}
%     q_F(t) := \frac{\|\bh_i^{(\delta\bw)}(X) - \bh_i(X)\|}{\|\bh_i(X)\|},
%     \label{eq:q}
% \end{align}
% where $\bh_i^{(\delta\bw)}(X)$ is the embedding of node~$i$ after a $\delta$-change in its features in the direction $\bw$ (see Equation~\ref{eq:hdelta}).
% All terms in the RHS are functions of the model $F$, graph $G$, features $X$, and node $i$ from the tuple $t$. 

% \txtred{Here, $t = (G, X, i, \bw)$ is a tuple consisting of a graph $G$ with node features $X$, a node index $i$, and a direction $w$.}
For intuition, $q_F(t)$ is a (normalized) magnitude of change in an embedding when we change the features $\bx_i$ of node~$i$ in the direction $\bw$, and $\beta_F$ compares the expected change near the stationary points of~$M$ versus randomly chosen points in~$\mathcal{Q}$.
The formula for~$\beta_F$ is also invariant under common transformations of the embeddings (Theorem~\ref{thm:betaZ}).
We will show below that $\beta_F$ is small when $F$ is the victim model or a surrogate, but not when $F$ is an independent model.
Thus, $\beta_F$ mimics the properties of the directional derivative at stationary points $\mathcal{S}(M)$.

\begin{lemma}
    Under Assumption~\ref{A:bhi}, for $\delta$ small enough, $\beta_M=O(\delta)$.
    \label{lem:betam}
\end{lemma}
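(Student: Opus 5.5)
We handle the numerator and the denominator of $\beta_M$ separately: the numerator is $O(\delta^2)$ and the denominator is of exact order $\delta$. Throughout, fix $F=M$, and for a tuple $t=(G,X,i,\bw)$ write $g_t(\tau)=\bh_i^{(\tau\bw)}(X)$, viewed as a function of the scalar $\tau$.

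\textbf{Numerator.} By Assumption~\ref{A:bhi}, $g_t$ is twice differentiable. Its second derivative is $\bw^\top(\nabla^2_{\bx_i}\bh_i)\bw$ taken componentwise. Since $\|\bw\|=1$ and the Hessian is bounded, this is bounded by a constant $C$ uniformly over $t$. The bound holds along the segment $[0,\delta]$, which lies in the $\delta$-neighborhood of a feasible graph where the assumption is taken to apply.

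Taylor's theorem with remainder then gives
\[
\bigl\|g_t(\delta)-g_t(0)-\delta\,\nabla_{\bw}\bh_i(X)\bigr\|\le \tfrac{C}{2}\delta^2 .
\]
For $t$ in the support of $\mu_M$, i.e.\ $t\in\mathcal{S}(M)$, we have $\nabla_{\bw}\bh_i(X)=\bm 0$. Hence $\|g_t(\delta)-g_t(0)\|\le C\delta^2/2$. Dividing by $\|\bh_i(X)\|\ge\ualpha_M$ gives $q_M(t)\le C\delta^2/(2\ualpha_M)$ for every such $t$, and therefore
\[
E_{t\sim\mu_M}q_M(t)=O(\delta^2).
\]

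\textbf{Denominator.} The same expansion with the triangle inequality gives
\[
q_M(t)\ge \frac{\delta\|\nabla_{\bw}\bh_i(X)\|-C\delta^2/2}{\oalpha_M}.
\]
Taking expectations over $\mathcal{D}_{exp}$ yields
\[
E_{\mathcal{D}_{exp}}q_M\ge \frac{\delta\kappa_M-C\delta^2/2}{\oalpha_M},
\qquad \kappa_M:=E_{t\sim\mathcal{D}_{exp}}\|\nabla_{\bw}\bh_i(X)\|.
\]
If $\kappa_M>0$, then for $\delta\le\kappa_M/C$ the denominator is at least $\delta\kappa_M/(2\oalpha_M)$. Combining with the numerator bound,
\[
\beta_M\le\frac{C\delta^2/(2\ualpha_M)}{\delta\kappa_M/(2\oalpha_M)}=\frac{C\oalpha_M}{\ualpha_M\kappa_M}\,\delta=O(\delta).
\]

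\textbf{Main obstacle.} The difficulty is justifying $\kappa_M>0$, which is not literally stated in Assumption~\ref{A:bhi}. I will argue it by contradiction. Suppose $\kappa_M=0$. Then $\nabla_{\bw}\bh_i=\bm 0$ for almost every $(G,X,i,\bw)$ under $\mathcal{D}_{exp}$. Since $\bw$ is uniform on the sphere and $\nabla_{\bw}\bh_i$ is linear in $\bw$, this forces the full Jacobian of $\bh_i$ with respect to $\bx_i$ to vanish almost surely. The embedding would then be locally insensitive to its own features everywhere, which is a degenerate, non-informative model. This is exactly the kind of model the paper rules out when it dismisses useless embeddings. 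I will state this non-degeneracy explicitly as the implicit condition behind ``for $\delta$ small enough'' and absorb $\kappa_M$ into the $O(\cdot)$ constant.

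A secondary check is uniformity. The constant $C$ must not depend on $t$, which the bounded-Hessian assumption provides. The Taylor step also requires the perturbed graph to remain in the region where Assumption~\ref{A:bhi} holds, which is what the $\delta$-neighborhood convention in Definition~\ref{defn:surrogate} and the Remark provide.
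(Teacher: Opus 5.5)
Your proof is correct and follows essentially the same route as the paper. A second-order expansion shows $q_M(t)=O(\delta^2)$ on $\mathcal{S}(M)$ using $\|\bm h_i\|\ge\underline{\alpha}_M$, and $E_{t\sim\mathcal{D}_{exp}}q_M(t)\ge(\delta\kappa_M-O(\delta^2))/\overline{\alpha}_M$ holds with $\kappa_M>0$ taken as a non-degeneracy condition, exactly as the paper does with its $u_M$. Your version is more explicit than the paper's: it gives the Taylor remainder constant, handles the sign in the reverse-triangle lower bound, and makes the threshold $\delta\le\kappa_M/C$ concrete.
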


Lemma~\ref{lem:betam} shows that $\beta_M$ is small for the victim model.
Next, consider $\beta_{M^\prime}$ for a surrogate model $M^\prime$.
Here, we need a stronger version of Assumption~\ref{A:invertible}.
Instead of just requiring $\|\phi(\bv_1)-\phi(\bv_2)\|\neq 0$ when $\|\bv_1-\bv_2\|\neq 0$, we now bound the change in norm due to $\phi(\cdot)$.
%that quantifies the maximum change between two embeddings when transformed by $\phi(\cdot)$.

% Ideally, we want $\beta_{M^\prime}=\beta_M$.
% The next theorem shows that this is indeed the case for some common transformations when $\epsilon=0$.
% When $\epsilon>0$, $\beta_{M^\prime}$ depends on $\epsilon, \delta,$ and the characteristics of $\phi(\cdot)$.
%For a worst-case analysis, 

\begin{assumption}\label{A:phistrong}
$\phi(\bm{0})=\bm{0}$, and there exist positive constants $c, C$ such that
\begin{equation}
    c\|\bv_1-\bv_2\| \;\leq\; \|\phi(\bv_1)-\phi(\bv_2)\| \;\leq\; C\|\bv_1-\bv_2\|.
\end{equation}
\end{assumption}

\begin{lemma}\label{lem:betamprime}
Suppose Assumptions~\ref{A:bhi} and~\ref{A:phistrong} hold, and $\epsilon$ is as defined in
Definition~\ref{defn:surrogate}. Then, for $\delta$ small enough and any surrogate $M^\prime$ with $\epsilon=o(\delta)$,
\begin{equation}
  \beta_{M^\prime} \;=\; O\!\left((C/c)^2\cdot\max(\epsilon/\delta,\, \delta)\right).
\end{equation}
\end{lemma}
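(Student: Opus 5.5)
We bound the numerator and the denominator of $\beta_{M'}$ separately, transferring each to the corresponding quantity for the victim $M$ with Assumption~\ref{A:phistrong} and Definition~\ref{defn:surrogate}. Write $\bh_i^\prime=\phi(\hbh_i)$ and abbreviate $\hbh_i:=\hbh_i(X)$, $\hbh_i^{\delta}:=\hbh_i^{(\delta\bw)}(X)$, and similarly $\bh_i$, $\bh_i^\delta$ for $M$. Since the perturbed graph lies in a $\delta$-neighborhood of a feasible graph, Definition~\ref{defn:surrogate} gives $\|\hbh_i-\bh_i\|\le\epsilon$ and $\|\hbh_i^\delta-\bh_i^\delta\|\le\epsilon$.

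\textbf{Numerator.} Assumption~\ref{A:phistrong} gives
\[
\|\bh_i^{\prime\delta}-\bh_i^\prime\|\le C\|\hbh_i^\delta-\hbh_i\|\le C\bigl(\|\bh_i^\delta-\bh_i\|+2\epsilon\bigr).
\]
Because $\phi(\bm 0)=\bm 0$, we also have $\|\bh_i^\prime\|=\|\phi(\hbh_i)-\phi(\bm 0)\|\ge c\|\hbh_i\|\ge c(\ualpha_M-\epsilon)$. Hence $q_{M'}(t)\le \frac{C}{c}\cdot\frac{\|\bh_i^\delta-\bh_i\|+2\epsilon}{\ualpha_M-\epsilon}$. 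At $t\in\mathcal{S}(M)$ the first derivative vanishes, so a second-order Taylor expansion with the bounded Hessian from Assumption~\ref{A:bhi} gives $\|\bh_i^\delta-\bh_i\|=O(\delta^2)$. This is exactly the numerator estimate behind Lemma~\ref{lem:betam}, which we reuse. Taking expectations under $\mu_M$ yields $E_{\mu_M}q_{M'}=O\bigl((C/c)(\delta^2+\epsilon)\bigr)$.

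\textbf{Denominator.} Symmetrically,
\[
\|\bh_i^{\prime\delta}-\bh_i^\prime\|\ge c\bigl(\|\bh_i^\delta-\bh_i\|-2\epsilon\bigr),
\qquad
\|\bh_i^\prime\|\le C(\oalpha_M+\epsilon).
\]
So $q_{M'}(t)\ge\frac{c}{C}\cdot\frac{\|\bh_i^\delta-\bh_i\|-2\epsilon}{\oalpha_M+\epsilon}$, and $E_{\mathcal{D}_{exp}}q_{M'}\ge \frac{c}{C}\cdot\frac{E_{\mathcal{D}_{exp}}\|\bh_i^\delta-\bh_i\|-2\epsilon}{\oalpha_M+\epsilon}$. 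We then need $E_{\mathcal{D}_{exp}}\|\bh_i^\delta-\bh_i\|=\Omega(\delta)$. Taylor expansion gives $\|\bh_i^\delta-\bh_i\|\ge\delta\|\nabla_{\bw}\bh_i\|-O(\delta^2)$, so it suffices that $\kappa:=E_{\mathcal{D}_{exp}}\|\nabla_{\bw}\bh_i\|>0$. The proof of Lemma~\ref{lem:betam} must already use this fact to get a denominator of order $\delta$, and we invoke it in the same form. Since $\epsilon=o(\delta)$, the $2\epsilon$ term is absorbed for $\delta$ small, and the denominator is at least $\frac{c}{C}\Omega(\delta)$.

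\textbf{Combining and main obstacle.} Dividing the two bounds gives
\[
\beta_{M'}=O\!\left(\frac{(C/c)(\delta^2+\epsilon)}{(c/C)\,\delta}\right)=O\!\left((C/c)^2(\delta+\epsilon/\delta)\right)=O\!\left((C/c)^2\max(\epsilon/\delta,\delta)\right),
\]
with constants depending on $\ualpha_M$, $\oalpha_M$, the Hessian bound and $\kappa$. The main obstacle is the denominator lower bound. Because $\hbh_i$ may be discontinuous, we must never differentiate it and must route everything through $\bh_i$ via the $\epsilon$-band; we also need $\epsilon=o(\delta)$ so the band error cannot swamp the $\Theta(\delta)$ change at generic points. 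A secondary point is verifying that the perturbed inputs remain in the region where Definition~\ref{defn:surrogate} applies, which holds because $\|\delta\bw\|=\delta$.
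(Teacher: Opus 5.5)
Your proof is correct and follows essentially the same route as the paper. Both bound $q_{M'}$ above using the $C$-Lipschitz bound, the $\epsilon$-band, and $\|\phi(\hbh_i)\|\ge c\|\hbh_i\|$ (via $\phi(\bm 0)=\bm 0$), bound it below symmetrically, and then combine the $O(\delta^2)$ Taylor estimate at stationary points with $u_M=E_{\mathcal{D}_{exp}}\|\nabla_{\bw}\bh_i\|>0$ and $\epsilon=o(\delta)$ to reach $(C/c)^2\,O(\max(\epsilon/\delta,\delta))$. Your version is slightly more explicit than the paper's, which absorbs your reverse-triangle $-2\epsilon$ term and your $C(\oalpha_M+\epsilon)$ bound into $O(\epsilon+\delta^2)$ and $2C\oalpha_M$.
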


The adversary wants a low reconstruction error $\epsilon$ so that the downstream accuracy of the surrogate $M^\prime$ is comparable to that of the victim $M$.
But, by Lemma~\ref{lem:betamprime}, a small $\epsilon$ forces $M^\prime$ to have a small $\beta_{M^\prime}$, just like $\beta_M$ (Lemma~\ref{lem:betam}).
This result is independent of how the adversary trains $M^\prime$. Next, consider an independent model~$I$.
By Assumption~\ref{A:idiosyncratic}, at least a $\gamma_M$ fraction of the stationary points of $M$ are unique to $M$, so they are not stationary for~$I$.
We further qualify this assumption next.

\begin{assumption}\label{A:q_idio}
Idiosyncratic stationary points of $M$ are not special for independent models $I$:
\begin{equation}
    E_{t\sim\mu_M}\!\left[ q_I(t)\mid t\notin \mathcal{S}(I)\right] \;=\; E_{t\sim\mathcal{D}_{\mathrm{exp}}}\!\left[ q_I(t) \right].
\end{equation}
\end{assumption}

Assumption~\ref{A:q_idio} states that the idiosyncratic stationary points of $M$ are not special for~$I$ in any way.
We only require the independent model~$I$ to deviate significantly from $M$ for these points; however, we keep this version to simplify the exposition.
Figure~\ref{fig:indep_psc} in the Appendix shows some evidence in favor of this assumption.
With this assumption, we show that $\beta_I=O(1)$, unlike surrogate models.

\smallskip
% Next, we show that $\beta_I$ is $O(1)$.
%for independent models, as shown next.
%In other words, the expected value of $q_I(t)$ for these points $t\in\mathcal{S}(M)\setminus \mathcal{S}(I)$ is the same as that for $t\sim\mathcal{D}_{exp}$.

\begin{lemma}
    Let $M$ and $I$ be independent models.
    Suppose Assumptions~\ref{A:bhi} holds for both models, and Assumptions~\ref{A:idiosyncratic} and~\ref{A:q_idio} hold.
    Then, for $\delta$ small enough, $\beta_I\geq \gamma_M + O(\delta),$ where $\gamma_M$ is defined in Assumption~\ref{A:idiosyncratic}.
    \label{lem:betai}
\end{lemma}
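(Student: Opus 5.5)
The plan is to split the numerator of $\beta_I$ according to whether the sampled stationary point of $M$ is also stationary for $I$. Write $A=\mathcal{S}(M)\setminus\mathcal{S}(I)$ and $B=\mathcal{S}(M)\cap\mathcal{S}(I)$. Then
\begin{equation*}
E_{t\sim\mu_M} q_I(t) \;=\; \mu_M(A)\,E_{t\sim\mu_M}\!\left[q_I(t)\mid t\in A\right] \;+\; \mu_M(B)\,E_{t\sim\mu_M}\!\left[q_I(t)\mid t\in B\right].
\end{equation*}
Dividing by the denominator $E_{t\sim\mathcal{D}_{exp}} q_I(t)$ of $\beta_I$ in Equation~\ref{eq:betaz} gives two contributions, and I will bound each one.

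For the first term, Assumption~\ref{A:q_idio} says exactly that $E_{t\sim\mu_M}[q_I(t)\mid t\in A]=E_{t\sim\mathcal{D}_{exp}} q_I(t)$. Hence this contribution to $\beta_I$ equals $\mu_M(A)$. By Assumption~\ref{A:idiosyncratic}(b), $\mu_M(A)\geq\gamma_M$.

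For the second term, note that $q_I\geq 0$, so this contribution is nonnegative and could simply be dropped to get $\beta_I\geq\gamma_M$. To match the stated form, I will instead show it is $O(\delta)$, using the same argument as Lemma~\ref{lem:betam} applied to $I$. For $t\in B$ we have $\nabla_{\bw}\bh_i(X;I)=\bm 0$. A Taylor expansion with the bounded Hessian from Assumption~\ref{A:bhi} (which holds for $I$) then gives $\|\bh_i^{(\delta\bw)}(X)-\bh_i(X)\|=O(\delta^2)$. Since $\|\bh_i\|\geq\ualpha_I$, it follows that $q_I(t)=O(\delta^2)$.

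The main obstacle is the denominator. I need $E_{t\sim\mathcal{D}_{exp}} q_I(t)=\Theta(\delta)$, which requires that a generic random direction produces a first-order change that is nonvanishing on average. Write $q_I(t)=\delta\|\nabla_{\bw}\bh_i\|/\|\bh_i\|+O(\delta^2)$, and assume the expectation of the first-order term, $\kappa_I$, is a positive constant. This is the same implicit condition already used in Lemma~\ref{lem:betam}, so I would invoke it in the same way. The $B$-contribution is then $O(\delta^2)/\Theta(\delta)=O(\delta)$. Combining the two contributions gives $\beta_I\geq\gamma_M+O(\delta)$; since the $B$ term is nonnegative, the bound in fact holds with the error term being nonnegative. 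I expect the remainder to be routine, requiring only $\delta$ small enough that the Taylor remainder is dominated by $\delta\kappa_I$.
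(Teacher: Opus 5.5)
Your proposal is correct and follows the paper's proof essentially step for step. Both split $E_{t\sim\mu_M}q_I(t)$ over $\mathcal{S}(M)\setminus\mathcal{S}(I)$ and $\mathcal{S}(M)\cap\mathcal{S}(I)$, use Assumption~\ref{A:q_idio} to turn the first piece into $\mu_M(\mathcal{S}(M)\setminus\mathcal{S}(I))\geq\gamma_M$ times the denominator, bound the second piece by $O(\delta^2)$, and rely on the implicit positivity of $E_{t\sim\mathcal{D}_{exp}}\|\nabla_{\bw}\bh_i\|$ (the paper's $u_I$) to make the denominator $\Theta(\delta)$. Your side remark is worth keeping: since $q_I\geq 0$, dropping the shared term gives $\beta_I\geq\gamma_M$ directly, needing only a positive denominator, so the $O(\delta)$ correction and the Taylor and $\Theta(\delta)$ estimates are unnecessary for this lower bound.
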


Our goal is to determine if the GNN $Z$ is a surrogate of the victim $M$ or an independent model.
Lemmas~\ref{lem:betam},~\ref{lem:betamprime}, and~\ref{lem:betai} show that $\beta_Z$ is small ($O(\delta)$ or $O(\epsilon/\delta+\delta)$) if $Z$ is the victim model or a surrogate, but at least $\gamma_M$ if $Z$ is independent.
This motivates Algorithm~\ref{alg:general} (also see Remark~\ref{rem:actual_betaZ} in Appendix~\ref{sec:app:proofs}).

% \begin{algorithm}
%     \caption{DetectSurrogate (General case)}
%     \begin{algorithmic}[1]
%     \item[{\bf Input:}] GNNs $M$ and $Z$, number of samples $\ell$, threshold $\theta$
%     \State $T\gets \ell \text{ independent samples from }\mathcal{S}(M)$ \Comment{(Def.~\ref{defn:stationary})}
%     \State $R\gets \ell \text{ independent samples from }\mathcal{D}_{exp}$ over $\mathcal{Q}$ \Comment{(Def.~\ref{defn:randomchoice})}
%     \State $\hat{\beta}_Z\gets \frac{\sum_{t\in T} q_Z(t)}{\sum_{t\in R} q_Z(t)}$ \Comment{(Eqs.~\ref{eq:q},~\ref{eq:betaz})}
%     \State \Return {\em Surrogate} if $\hat{\beta}_Z\leq \theta$ else {\em Independent}
%     \end{algorithmic}
%     \label{alg:general}
% \end{algorithm}
\begin{algorithm}[h]
    \caption{DetectSurrogate (General case)}
    \label{alg:general}
    \small
    \setlength{\belowcaptionskip}{-0.5em}
    \begin{algorithmic}[1]
    \item[{\bf Input:}] GNNs $M$ and $Z$, number of samples $\ell$, threshold $\theta$
    \State $T \gets \ell$ samples from $\mathcal{S}(M)$ \hfill\Comment{Def.~\ref{defn:stationary}}
    \State $R \gets \ell$ samples from $\mathcal{D}_{\exp}$ over $\mathcal{Q}$ \hfill\Comment{Def.~\ref{defn:randomchoice}}
    \State $\hat{\beta}_Z \gets \big(\sum_{t\in T} q_Z(t)\big) / \big(\sum_{t\in R} q_Z(t)\big)$ \hfill\Comment{Eq.~\ref{eq:betaz}}
    \State \Return {\em Surrogate} if $\hat{\beta}_Z \leq \theta$ else {\em Independent}
    \end{algorithmic}
\end{algorithm}

\begin{theorem}
    Under the assumptions of Lemmas~\ref{lem:betamprime} and~\ref{lem:betai} and large enough $\ell$, Algorithm~\ref{alg:general} with $\theta=\gamma_M/2$ is correct with high probability.
    \label{thm:general}
\end{theorem}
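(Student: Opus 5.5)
The plan is to show that the empirical ratio $\hat{\beta}_Z$ concentrates around the population ratio $\beta_Z$, and then combine this with the gap between surrogate and independent models from Lemmas~\ref{lem:betamprime} and~\ref{lem:betai}. Write $\hat{\beta}_Z = A_\ell/B_\ell$, where $A_\ell = \frac{1}{\ell}\sum_{t\in T} q_Z(t)$ and $B_\ell=\frac{1}{\ell}\sum_{t\in R} q_Z(t)$. These are averages of i.i.d.\ samples with means $a:=E_{\mu_M} q_Z$ and $b:=E_{\mathcal{D}_{exp}} q_Z$, so $\beta_Z=a/b$.

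\textbf{Step 1: boundedness.} We need $q_Z(t)$ to lie in a bounded interval so that Hoeffding's inequality applies. For $Z=M'$ a surrogate, Assumption~\ref{A:phistrong} and Definition~\ref{defn:surrogate} give $\|\bh_i'\|\geq c\|\hbh_i\|\geq c(\ualpha_M-\epsilon)$ and $\|\bh_i'^{(\delta\bw)}-\bh_i'\|\leq C(2\epsilon+\|\bh_i^{(\delta\bw)}-\bh_i\|)\leq C(2\epsilon+2\oalpha_M)$. Hence $q_{M'}\leq U:=2C(\oalpha_M+\epsilon)/(c(\ualpha_M-\epsilon))$. For $Z=I$, Assumption~\ref{A:bhi} applied to $I$ gives $q_I\leq 2\oalpha_I/\ualpha_I$. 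So in both cases $q_Z\in[0,U]$.

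\textbf{Step 2: concentration.} Hoeffding gives $|A_\ell-a|\leq \eta$ and $|B_\ell-b|\leq\eta$ with probability at least $1-4e^{-2\ell\eta^2/U^2}$. We also need $b$ bounded away from zero, since otherwise the ratio is unstable. For the surrogate, I would take the lower bound on $E_{\mathcal{D}_{exp}}q_{M'}$ from the proof of Lemma~\ref{lem:betamprime}. There it should be of order $(c/C)\delta\,E\|\nabla_{\bw}\bh_i\|/\oalpha_M$ minus $O(\epsilon)$, which is $\Omega(\delta)$ because $\epsilon=o(\delta)$. The same kind of bound is presumably implicit in Lemma~\ref{lem:betai} for $I$. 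With $\eta=b/4$, we get $\hat{\beta}_Z\in[(a-\eta)/(b+\eta),(a+\eta)/(b-\eta)]$, so $|\hat\beta_Z-\beta_Z|\leq \frac{a+b}{b}\cdot\frac{2\eta}{b-\eta}$, and choosing $\eta$ a small fraction of $b\gamma_M$ makes this deviation at most $\gamma_M/4$. The required sample size is then $\ell=O(U^2/(b^2\gamma_M^2)\log(1/\rho))$ for failure probability $\rho$.

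\textbf{Step 3: the decision.} If $Z$ is a surrogate, Lemma~\ref{lem:betamprime} gives $\beta_Z=O((C/c)^2\max(\epsilon/\delta,\delta))$, which is at most $\gamma_M/4$ for $\delta$ small and $\epsilon=o(\delta)$. Then $\hat\beta_Z\leq\gamma_M/2=\theta$, and the algorithm outputs \emph{Surrogate}. If $Z$ is independent, Lemma~\ref{lem:betai} gives $\beta_Z\geq\gamma_M-O(\delta)\geq 3\gamma_M/4$ for small $\delta$. Then $\hat\beta_Z>\theta$, and the algorithm outputs \emph{Independent}. The victim itself is handled the same way via Lemma~\ref{lem:betam}.

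\textbf{Main obstacle.} The delicate step is the lower bound on the denominator $b$ in Step 2, which is needed to control the ratio. The difficulty is that $b=\Theta(\delta)$ is itself small, so the sample complexity scales like $1/\delta^2$. My first attempt would be a relative-error Bernstein bound, using that $q_Z=O(\delta+\epsilon)$ almost surely in the surrogate case. Then the variance is $O(\delta^2)$ and $\ell$ becomes independent of $\delta$. If that fails, I would simply state "large enough $\ell$" with the explicit dependence on $\delta$.
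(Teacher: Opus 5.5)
Your proposal is correct and follows the same route as the paper, whose proof is a two-line sketch: $q_Z$ is bounded, so Hoeffding bounds make the numerator and denominator of $\hat{\beta}_Z$ converge to those of $\beta_Z$, and the gap between Lemmas~\ref{lem:betamprime} and~\ref{lem:betai} then gives the result. Your version fills in details the paper omits. You bound $q_{M^\prime}$ through Assumption~\ref{A:phistrong}, which is needed because Assumption~\ref{A:bhi} alone covers only $M$ (and $I$), and you make explicit the $\Theta(\delta)$ lower bound on the denominator. The one detail left implicit is that your factor $(a+b)/b = 1+\beta_Z$ must stay $O(1)$ for independent models; this follows from the decomposition in the proof of Lemma~\ref{lem:betai} together with Assumption~\ref{A:q_idio}, or can be avoided entirely by a one-sided comparison of $(a-\eta)/(b+\eta)$ with $\gamma_M/2$.
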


\textbf{Selecting $\theta$ in Algorithm~\ref{alg:general}:}
Theorem~\ref{thm:general} shows that $\theta$ depends on the unknown parameter $\gamma_M$.
However, $\gamma_M$ is upper-bounded by $\beta_I$ for any independent model~$I$ (Lemma~\ref{lem:betai}).
Thus, we can train several independent GNNs on the same data as $M$ to get an upper bound for $\gamma_M$, which we then use to set the threshold.
Alternatively, one can use the distribution of $\beta_I$ over all datasets and GNN architectures (Figure~\ref{fig:indep_psc} in the Appendix).

%\vspace{-1em}
\textbf{Sampling stationary points:}
Recall that a stationary point is a tuple $(G, X, i, \bw) \in \mathcal{Q}$ such that $\|\nabla_{\bw} \bh_i(G, x; M)\|=0$ (Definition~\ref{defn:stationary}).
To find such a point, we sample a point $(G, X_0, i, \bw)\sim\mathcal{D}_{exp}$ (Definition~\ref{defn:randomchoice})  and solve
\begin{equation}
    X = \argmin_X \left(\frac{\|\nabla_{\bw} \bh_i(G, X; M)\|}{\|\bh_i(G, X; M)\|} + \lambda\cdot \frac{\|X-X_0\|_F}{\|X_0\|_F}\right).
    \label{eq:sample}
\end{equation}
We then use $(G, X, i, \bw)$ when a sample from $\mathcal{S}(M)$ is needed in Algorithm~\ref{alg:general}.
In the first term, we normalize $\|\nabla_{\bw} \bh_i\|$ by $\|\bh_i\|$ since this matches the form of $q_M$ (Definition~\ref{defn:randomchoice}).
The second term ensures that the chosen node features $X$ are close to $X_0$ drawn from the data distribution.
Hence, $(G, X)$ looks like a realistic graph.
We use a solver that does not require access to gradients~\citep{bennet2021nevergrad}.
Figure~\ref{fig:cosine} in Appendix~\ref{sec:app:details} shows that the graphs picked using Equation~\ref{eq:sample} are all distinct. % from each other.

The optimization in Eq.~\ref{eq:sample} operates on the node features $X$, not on the graph structure $G$.
For very large graphs, it is not necessary to optimize over the entire feature matrix $X$.
Since the goal is to find a stationary point for the embedding of a single node $i$, it suffices to optimize over the features of node $i$ and its $k$-hop neighborhood.
This reduces the per-iteration cost to a forward pass over a local subgraph of size depending on $k$, rather than the full graph.
In our experiments, the embedding dimension has a larger effect on runtime than graph size.

\textbf{Extension to integer-valued features:}
When the features are constrained to be integers, we cannot construct $\bh_i^{(\delta\bw)}$ (Definition~\ref{defn:stationary}) for arbitrary $\delta>0$ and direction~$\bw$.
Instead, we set $\delta=1$ and sample $\bw\in\mathbb{R}^D$ from the set of axis-aligned unit vectors:
$
    \bw = \bm{e}_j, \text{ where } j\sim \text{Uniform}\{1, \ldots, D\},
    % \bw\sim \text{Uniform}\{\underbrace{\begin{bmatrix}0\ldots 0 1 0\ldots 0\end{bmatrix}}_{\text{only 1 for axis $j$}}\mid j\in\{1,\ldots,D\}\}.
$
and $\bm{e}_j$ is the standard basis-vector along the $j^{th}$ feature.
%With this change, $q_F(t)$ measures the relative change in a node's embedding when we increment one component of that node's features.
We also replace $\nabla_{\bw} \bh_i$ by $\bh_i^{(\bw)}-\bh_i$ in Equation~\ref{eq:sample} when sampling stationary points.
While Theorem~\ref{thm:general} assumes a small $\delta$, we show empirically that this approach works well in practice.

% In the formula for $q_F(t)$ (Equation~\ref{eq:q}) needed for $\beta_Z$, we can no longer construct $\bh_i^{(\delta\bw)}$ (Equation~\ref{eq:hdelta}) for arbitrary $\delta>0$ and direction~$\bw$ .

% In other words, $q_Z(t)$ measures the relative change in a node's embedding when we increment one component of that node's features.
% We also replace $\nabla_{\bw} \bh_i$ by $\bh_i^{(\bw)}-\bh_i$ in Equation~\ref{eq:sample} when sampling stationary points.

\textbf{Robustness against countermeasures:}
Suppose the adversary can examine the queries sent to their surrogate model.
The adversary cannot detect or penalize queries about stationary points for two reasons.
First, the regularization in Equation~\ref{eq:sample} makes those graphs look realistic.
Second, recall that we query the surrogate with a graph $(G, X)$.
Our chosen $(G, X)$ is stationary only for a random node~$i$ and direction~$\bw$, both unknown to the adversary.

The adversary could take a different approach by adding noise to their output embeddings.
However, this degrades their downstream performance.
Furthermore, the adversary does not know how much noise is needed to go beyond \ourmethod’s surrogate detection threshold.
This depends on \ourmethod's scores for independent models, which the adversary does not have access to.
We also show in Section~\ref{sec:exp} that \ourmethod is robust against adversarial attacks such as pruning and fine-tuning, which introduce noise.

Finally, note that Algorithm~\ref{alg:general} generates new query graphs each time.
Thus, even if some query graphs leak, future surrogate detection remains secure.

% \begin{remark}
%     In Algorithm~\ref{alg:general}, we can use an alternative formula for $\hat{\beta}_Z$:
%     \begin{align}
%         \hat{\beta}_Z &= \text{Mean}\left(\{ \text{percentile of $q_Z(t)$ in $U$}\}_{t\in T}\right)\nonumber\\
%         \text{where }U &= \{q_Z(t)\mid t\in R\}.\label{eq:percent}
%     \end{align}
%     % \txtblue{RN: Will it be confusing to use $Q$ here, but $\mathcal{Q}$ for the query tuple set? DC: Changed to U?}
    
%     Here, $U$ contains samples of $q_Z(t)$ for $t\sim\mathcal{D}_{exp}$, supported on the positive real line.
%     Now, consider a point $t\in T$.
%     If $t$ is a stationary point for $Z$, $q_Z(t)\approx 0$, so the percentile of $q_Z(t)$ in $U$ is also small.
%    Otherwise, $t$ is an idiosyncratic stationary point of $M$.
%    In this situation, we expect $q_Z(t)$ to be distributed like a tuple drawn from $\mathcal{D}_{exp}$, i.e., with a percentile score distributed uniformly between $0$ and $100$.
%     Thus, $\hat{\beta}_Z$ is small if $Z$ is a surrogate but high if $Z$ is independent.
%    This mirrors the behavior of $\beta_Z$ (Equation~\ref{eq:betaz}) while avoiding division-related instability.   
%    We use this variant in our experiments.
% \end{remark}

\section{Experiments}
\label{sec:exp}

\begin{table*}[t]
\centering
\scriptsize
\begin{tabular}{l|ccccc|ccccc}
\toprule
& \multicolumn{5}{c|}{\textbf{\ourmethod}} & \multicolumn{5}{c}{\textbf{PreGIP}}\\
\textbf{Dataset} 
    & \textbf{GCN} & \textbf{GIN} & \textbf{GSage} & \textbf{ARMA} & \textbf{MixHop}
    & \textbf{GCN} & \textbf{GIN} & \textbf{GSage} & \textbf{ARMA} & \textbf{MixHop} \\
\midrule
Citeseer
         & 1.00 & 1.00 & 1.00 & 1.00 & 1.00
         & 0.75 & \hl{0.00} & 0.90 & 0.85 & 0.90
\\
OGBMag
         & 1.00 & 1.00 & 1.00 & 1.00 & 1.00
         & 1.00 & \hl{0.30} & 1.00 & 1.00 & 0.95
\\
HIV
         & 1.00 & 1.00 & 0.64 & 0.78 & 1.00
         & 0.65 & 1.00 & 1.00 & 0.50 & 1.00
\\
Yelp
         & 1.00 & 1.00 & 1.00 & 0.94 & 1.00
         & 0.65 & 1.00 & 0.55 & 1.00 & 0.60
\\
MNIST
         & 1.00 & 1.00 & 1.00 & 0.78 & 1.00
         & 0.80 & 0.95 & 0.55 & 0.80 & 0.60
\\
BBBP
         & 1.00 & 1.00 & 0.93 & 1.00 & 1.00
         & 0.60 & \hl{0.00} & 0.70 & \hl{0.20} & 0.55
\\
Pubmed
         & 1.00 & 1.00 & 0.79 & 0.83 & 0.93
         & 0.80 & 1.00 & \hl{0.40} & 0.70 & 0.90
\\
QM9
         & 1.00 & 1.00 & 1.00 & 0.78 & 1.00
         & 1.00 & 1.00 & 1.00 & 0.90 & 1.00
\\
Fin
         & 1.00 & 0.86 & 0.86 & 0.78 & 1.00
         & 1.00 & 1.00 & 1.00 & 1.00 & 1.00
\\
Amazon
         & 1.00 & 0.64 & 1.00 & 0.67 & 1.00
         & 0.80 & 1.00 & 0.80 & 0.80 & 0.90
\\
Coco
         & 1.00 & 1.00 & 0.93 & 0.94 & 0.93
         & 1.00 & \hl{0.25} & 1.00 & 0.60 & \hl{0.30}
\\
DBLP
         & 1.00 & 1.00 & 0.64 & 1.00 & 1.00
         & 0.90 & \hl{0.00} & 1.00 & 0.95 & 0.90
\\
CIFAR
         & 1.00 & 1.00 & 1.00 & 0.94 & 1.00
         & 1.00 & 0.65 & 0.60 & \hl{0.30} & 0.70
\\
Computers
         & 1.00 & 1.00 & 1.00 & 1.00 & 1.00
         & 1.00 & 1.00 & \hl{0.35} & 1.00 & 1.00
\\
\midrule
{\bf Average}
         & {\bf 1.00} & {\bf 0.96} & {\bf 0.91} & {\bf 0.89} & {\bf 0.99}
         & 0.85 & 0.65 & 0.77 & 0.76 & 0.81
\\

\bottomrule
\end{tabular}
\caption{\textbf{ AUC of surrogate detection under model extraction attack (higher is better):}
Each AUC is computed by classifying all surrogates (across all random seeds and all 5 GNN architectures) against all independent models (again across seeds and architectures).
PreGIP's predictions are occasionally worse than random (in red).
Averaged over 14 datasets, \ourmethod dominates for all GNN architectures.
It has a perfect AUC for GCN, and is nearly perfect for MixHop.
%Instances where a method is worse than random are highlighted in red.
}
\label{tab:dataset-model-results}
\vspace{-1em}
\end{table*}

We ran experiments to assess the accuracy of \ourmethod and competing methods for the surrogate GNN identification problem.
We used 14 datasets from diverse domains and tested 5 popular GNN architectures against various adversarial attacks and embedding transformations.

% \subsection{Experimental Setup}
% \label{sec:exp:setup}

\textbf{Datasets:} 
We present results on 14 graph datasets covering molecules, citations, co-purchases, social, and financial networks.
Each dataset consists of several graphs with associated node features.
For datasets containing only a single graph, we used k-hop subgraphs around randomly chosen nodes.
All datasets are available from the {\em PyG} library, and their details are in Appendix~\ref{sec:app:details}.

\textbf{Base GNN Models:}
We used the GCN~\citep{kipf2017semi}, GIN~\citep{xu2019how}, GraphSage~\citep{hamilton2017inductive}, ARMA~\citep{bianchi2022conv}, and MixHop~\citep{abu-el-haija_mixhop_2019} GNN architectures.
For each dataset, we trained all models on two data splits per task (node or graph classification/regression).
For each model, we set it as the victim and the others as independent models.
The surrogate models are trained via the following adversarial attacks.

% \textbf{Adversarial attacks:}
% Our attacks are modeled on those from~\citep{grove}.
% In {\em model extraction,} we sampled graphs, got their embeddings from the victim model, and trained a GNN to output similar embeddings.
% We trained models for all five architectures and selected the one with the most similar embeddings to the victim's embeddings as the surrogate model.
% In {\em fine-tuning,} we fine-tuned a previously trained surrogate with training labels.
% In {\em pruning,} we randomly set some model weights to zero.

% \textbf{Transformations:}
% We consider the following functions:\\
% {\em Matrix multiplication} of the embeddings by a random orthogonal, Gaussian, or permutation matrix;\\
% {\em Scaling and translating} the embeddings by different levels for each vector component;\\
% {\em Projecting} embeddings to a higher dimension;\\
% Applying {\em component-wise functions} such as sigmoid; and\\
% {\em Normalizing} the embeddings.\\
% We also used combinations of these transformations.

\textbf{Competing methods:}
We compare our method with the recent watermarking method PreGIP~\citep{pregip} and the fingerprinting method GrOVe~\citep{grove}.
PreGIP's surrogate detection does not directly test embeddings.
So, for PreGIP, we score every graph by computing $\text{mean-distance}(G_a, G_b)$ over watermark graph pairs $(G_a, G_b)$, and normalizing it by the same formula for randomly-chosen pairs of graphs.
We compute surrogate detection AUCs using these scores.
Using other formulas, such as cosine similarity, reduces PreGIP's performance (Appendix~\ref{sec:app:details}).

\subsection{Accuracy under Model Extraction Attack}
\label{sec:exp:modelExtract}

% In model extraction, the adversary uses the victim model to construct a training set.
% The adversary then trains a surrogate to accurately replicate the victim model's embeddings.
% Therefore, surrogate models generated via model extraction are likely to perform similarly to the victim model on downstream tasks.

%We compare \ourmethod to watermarking and show it outperforms watermarks under model extraction attacks.
We first show that watermarking via PreGIP is vulnerable to model extraction attacks.
Here, the adversary samples graphs, gets their embeddings from the victim model, and trains a GNN to output similar embeddings.
For each dataset, we trained independent models using multiple GNN architectures across two data splits. Treating each as a victim, we then trained surrogate models. For PreGIP, we followed the same procedure, but watermarked the victim before training the surrogate.

We evaluated both \ourmethod and PreGIP on all surrogate and independent models. Each method assigns a score to determine whether a model is a surrogate or independent. Table~\ref{tab:dataset-model-results} reports the AUC of surrogate detection based on these scores.
%We make the following observations.

\textbf{\ourmethod is accurate:}
\ourmethod has {\bf perfect surrogate detection (AUC=1)} on $73\%$ of all combinations of datasets and GNN architectures.
The AUC exceeds $0.9$ in $83\%$ of the cases.
\ourmethod's average AUC is highest for GCN (always perfect), MixHop ($0.99$), and GIN ($0.96$).

\textbf{\ourmethod outperforms watermarks:}
Across all GNN architectures, {\bf \ourmethod's AUC is 17\%-48\% better} than PreGIP.
For GIN, \ourmethod outperforms by $48\%$.
Furthermore, PreGIP's scoring is sometimes worse than random (highlighted in red in Table~\ref{tab:dataset-model-results}).

Recall that in model extraction, the adversary uses the victim model to create a new training set.
This set is unlikely to contain PreGIP's watermark graphs.
This limits the effectiveness of watermarking.
In contrast, \ourmethod uses no watermarks and works well for all architectures.

\subsection{Accuracy under Embedding Transformations}
Having shown that watermarking (PreGIP) is vulnerable to model extraction, we now examine the robustness of fingerprinting (GrOVe) under embedding transformations. We constructed surrogate models by applying various transformations to their output embeddings, including ones that may violate our assumptions, and evaluated all methods on these modified models.

Recall that, given an independent model and a surrogate, we compared their scores to determine which is the surrogate. After transformation, the surrogate’s score may change, potentially altering this comparison. To quantify this effect, we measured the fraction of independent models for which the prediction differed before and after transformation. A lower fraction indicates greater robustness. Table~\ref{tab:advTransforms} reports these fractions, averaged over all 14 datasets.

% Having shown that watermarking (PreGIP) is susceptible to model extraction, we now show that fingerprinting (GrOVe) suffers under embedding transformation.
% We created new surrogate models by transforming their output embeddings with various functions, which do not necessarily obey all our assumptions. We then evaluated transformed models using all methods. Recall that, given an independent model and a surrogate, we compare their scores to determine which is the surrogate.
% Now, the original and transformed surrogates can have different scores.
% As a result, comparing the two against the same independent model can yield different predictions.
% To quantify this effect, we recorded the fraction of independent models where the prediction changed after transformation.
% A smaller fraction indicates that a method is more robust to the transformation.
% Table~\ref{tab:advTransforms} lists these fractions, averaged over all 14 datasets.

\textbf{\ourmethod is robust against embedding transformations:}
\ourmethod is immune to rotation, scaling, projection to a higher dimension, and combinations of these ($0\%$ change).
This matches Theorem~\ref{thm:betaZ}.
We find that exponentiation has the greatest effect on \ourmethod.
%Specifically, translations affect the normalization in Equation~\ref{eq:q}, while 
Exponentiation distorts distances between embeddings (Assumption~\ref{A:phistrong}), which affects detection accuracy (Lemma~\ref{lem:betamprime}).
We note that \ourmethod works for translations, even though they do not match Assumption~\ref{A:phistrong}.
%We note that the effect of translations can be removed by shifting the origin to center the embeddings.
%Overall, \ourmethod is more robust to embedding transformations than any competing method.

\textbf{PreGIP and GrOVe are affected in different ways:}
PreGIP is much more susceptible to exponential and power transformations than \ourmethod.
In contrast, GrOVe is inapplicable when the surrogate embeddings are projected to higher dimensions.
GrOVe is also more sensitive than \ourmethod to the scaling of embedding components.

\begin{table*}[t]
\centering
\scriptsize
\resizebox{\textwidth}{!}{
\begin{tabular}{l|c@{\hspace{0.5em}}c@{\hspace{0.5em}}c@{\hspace{0.5em}}c@{\hspace{0.5em}}c|c@{\hspace{0.5em}}c@{\hspace{0.5em}}c@{\hspace{0.5em}}c@{\hspace{0.5em}}c|c@{\hspace{0.5em}}c@{\hspace{0.5em}}c@{\hspace{0.5em}}c@{\hspace{0.5em}}c}
\toprule
& \multicolumn{5}{c|}{\bf \ourmethod}
& \multicolumn{5}{c|}{\bf PreGIP}
& \multicolumn{5}{c}{\bf GrOVe}\\
{\bf Embedding Transformation} & \textbf{GCN} & \textbf{GIN} & \textbf{GSage} & \textbf{ARMA} & \textbf{MixHop} 
 & \textbf{GCN} & \textbf{GIN} & \textbf{GSage} & \textbf{ARMA} & \textbf{MixHop} 
 & \textbf{GCN} & \textbf{GIN} & \textbf{GSage} & \textbf{ARMA} & \textbf{MixHop} \\
\midrule

Permute
         & 0\% & 0\% & 0\% & 0\% & 0\%
         & 0\% & 0\% & 0\% & 0\% & 0\%
         & 0\% & 4\% & 4\% & 15\% & 0\%
\\
Rotate
         & 0\% & 0\% & 0\% & 0\% & 0\%
         & 0\% & 0\% & 0\% & 0\% & 0\%
         & 0\% & 4\% & 10\% & 16\% & 0\%
\\
Rotate, scale by 5
         & 0\% & 0\% & 0\% & 0\% & 0\%
         & 0\% & 0\% & 0\% & 0\% & 0\%
         & 0\% & 32\% & 28\% & 48\% & 0\%
\\
Project($\mathbb{R}^d \to \mathbb{R}^{5d}$), rotate, scale by 5
         & 0\% & 0\% & 0\% & 0\% & 0\%
         & 0\% & 0\% & 0\% & 0\% & 0\%
         & \hl{N/A} & \hl{N/A} & \hl{N/A} & \hl{N/A} & \hl{N/A}
\\
Multiply by $d\times d$ Gaussian mat.
         & 1\% & 1\% & 2\% & 2\% & 2\%
         & 2\% & 1\% & 4\% & 7\% & 3\%
         & 0\% & 18\% & 29\% & 35\% & 0\%
\\
(as above) and scale each entry by Unif$(1,10)$
         & 2\% & 1\% & 2\% & 3\% & 3\%
         & 3\% & 2\% & 3\% & 7\% & 4\%
         & 7\% & \hl{52\%} & \hl{50\%} & \hl{53\%} & 8\%
\\
Multiply by $5d\times d$ Gaussian mat.
         & 0\% & 0\% & 1\% & 1\% & 1\%
         & 1\% & 0\% & 2\% & 4\% & 2\%
         & \hl{N/A} & \hl{N/A} & \hl{N/A} & \hl{N/A} & \hl{N/A}
\\
(as above) and scale each entry by Unif$(1,10)$
         & 1\% & 1\% & 1\% & 1\% & 1\%
         & 1\% & 0\% & 2\% & 4\% & 3\%
         & \hl{N/A} & \hl{N/A} & \hl{N/A} & \hl{N/A} & \hl{N/A}
\\
\midrule
$\bh_{ij}\to \tan^{-1}(\bh_{ij})$
         & 3\% & 0\% & 4\% & 1\% & 2\%
         & 4\% & 45\% & 12\% & 3\% & 10\%
         & 0\% & 0\% & 0\% & 1\% & 0\%
\\
$\bh_{ij}\to \exp(\bh_{ij})$
         & 13\% & 9\% & 20\% & 40\% & 26\%
         & 11\% & \hl{77\%} & 20\% & 40\% & 31\%
         & 0\% & 7\% & 27\% & 8\% & 7\%
\\
$\bh_{ij}\to \text{sigmoid}(\bh_{ij})$
         & 7\% & 2\% & 14\% & 21\% & 12\%
         & 4\% & 43\% & 2\% & 3\% & 9\%
         & 0\% & 0\% & 0\% & 0\% & 0\%
\\
$\bh_{ij}\to \text{sinh}(\bh_{ij})$
         & 9\% & 1\% & 4\% & 16\% & 7\%
         & 9\% & \hl{77\%} & 14\% & 30\% & 34\%
         & 0\% & 6\% & 12\% & 7\% & 4\%
\\
$\bh_{ij}\to \text{tanh}(\bh_{ij})$
         & 3\% & 0\% & 5\% & 1\% & 2\%
         & 10\% & 48\% & 15\% & 8\% & 15\%
         & 0\% & 0\% & 0\% & 1\% & 0\%
\\
$\bh_{ij}\to\bh_{ij}^3$
         & 7\% & 5\% & 17\% & 16\% & 12\%
         & 9\% & \hl{77\%} & 44\% & \hl{66\%} & 34\%
         & 0\% & 8\% & 21\% & 10\% & 4\%
\\
$\bh_{ij}\to\bh_{ij}^5$
         & 7\% & 5\% & 21\% & 21\% & 16\%
         & 16\% & \hl{77\%} & \hl{54\%} & \hl{72\%} & 44\%
         & 4\% & 8\% & 25\% & 14\% & 7\%
\\
% $\bh_{ij}\to \bh_{ij}+1$
Translate each entry by $1$
         & 8\% & 2\% & 14\% & 21\% & 12\%
         & 0\% & 0\% & 0\% & 0\% & 0\%
         & 0\% & 0\% & 0\% & 0\% & 0\%
\\
%$\bh_{ij}\to \bh_{ij}+\mathrm{Unif}(1,5)$
Translate each entry by $\mathrm{Unif}(1,5)$
         & 8\% & 3\% & 18\% & 26\% & 18\%
         & 0\% & 0\% & 0\% & 0\% & 0\%
         & 0\% & 0\% & 0\% & 0\% & 0\%
\\
%$\bh_{ij}\to \bh_{ij}+\mathrm{Unif}(1,10)$
Translate each entry by $\mathrm{Unif}(1,10)$
         & 9\% & 3\% & 20\% & 26\% & 22\%
         & 0\% & 0\% & 0\% & 0\% & 0\%
         & 0\% & 0\% & 0\% & 0\% & 0\%
\\
%$\bh_{i}\to \bh_{i}/\|\bh_i\|_1$
Normalize to unit $L_1$ norm
         & 1\% & 0\% & 3\% & 6\% & 1\%
         & 22\% & \hl{56\%} & 37\% & 18\% & 20\%
         & 0\% & 0\% & 0\% & 2\% & 0\%
\\
%$\bh_{i}\to \bh_{i}/\|\bh_i\|_2$
Normalize to unit $L_2$ norm
         & 1\% & 0\% & 3\% & 6\% & 2\%
         & 22\% & \hl{57\%} & 38\% & 15\% & 19\%
         & 0\% & 0\% & 1\% & 7\% & 0\%
\\
\midrule
{\bf Average}
         & {\bf 4\%} & {\bf 1\%} & {\bf 7\%} & {\bf 10\%} & {\bf 6\%}
         & 5\% & 28\% & 12\% & 13\% & 11\%
         & 15\% & 22\% & 25\% & 26\% & 16\%
\\

\bottomrule
\end{tabular}
}
\caption{\textbf{ Sensitivity to embedding transformations (lower is better):} 
We report the fraction of independent models for which the classification changes after transforming the surrogate (averaged over 14 datasets). Lower values indicate greater robustness. PreGIP performs poorly for GIN models and under exponential and power transformations. GrOVe is inapplicable when transformations change the embedding dimension (counted as $100\%$ error). \ourmethod remains robust everywhere.}

% %We applied various transformations to each surrogate's output embeddings to create transformed versions.
% We report the fraction of independent models against which a surrogate is correctly classified, but its transformed version is not, or vice versa (averaged over 14 datasets).
% % The fractions are averaged over all 14 datasets.
% % Lower fractions imply greater robustness to transformations. 
% %Red highlights mark changes of $50\%$ or more.
% PreGIP underperforms for GIN models and under exponential and power transformations.
% GrOVe is inapplicable for transformations that change the embedding dimension (counted as $100\%$ error rate in the average).
% \ourmethod performs well under all transformations.
% % The numbers are averaged over all datasets.report this number, normalized by the 
% % For each transform, we calculate the fraction of independent models that are correctly classified \txtred{a method} correctly identifies the surrogate model in head-to-head hypothesis tests against an independent model, averaged over all datasets and GNN architectures (higher is better).

\label{tab:advTransforms}
\vspace{-1em}
\end{table*}

\begin{figure}[t]
    \centering
    \begin{subfigure}{0.3\textwidth}
        \centering
        \includegraphics[width=\textwidth]{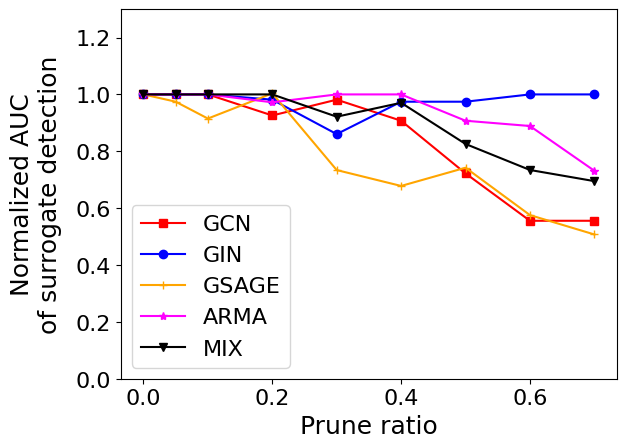}
        \caption{Effect of pruning}
        \label{fig:allTheRest:prune}
    \end{subfigure}
    \begin{subfigure}{0.3\textwidth}
        \centering
        \includegraphics[width=\textwidth]{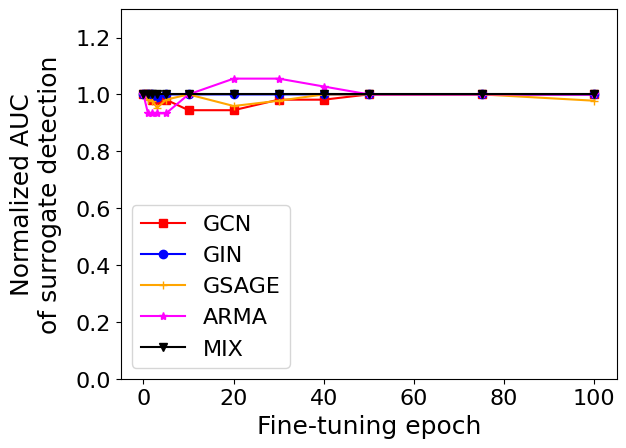}
        \caption{Effect of fine-tuning}
        \label{fig:allTheRest:finetune}
    \end{subfigure}
    \caption{{\em AUC of surrogate detection against pruning and fine-tuning attacks:}
    Results are normalized against no-tuning/fine-tuning.
    }
    %\ourmethod is robust under these attacks.}
    \label{fig:allTheRest}
    \vspace{-1em}
\end{figure}

\subsection{Accuracy under Other Adversarial Attacks}
In a {\em pruning attack}, the adversary zeros out a fraction of the surrogate model’s weights. To evaluate robustness, we compute \ourmethod’s AUC for detecting the pruned surrogate among independent models, normalized by the AUC of the unpruned model. Figure~\ref{fig:allTheRest:prune} reports the trimmed mean of these normalized AUCs across multiple datasets (Citeseer, Pubmed, DBLP, Amazon, Computers).

\ourmethod remains robust up to $\mathbf{40\%}$ pruning, after which the detection AUC declines gradually. Notably, for GIN, the AUC remains stable even with up to 70\% of weights pruned.

Next, we consider a {\em fine-tuning attack}, where the adversary updates the surrogate using a small labeled dataset. Figure~\ref{fig:allTheRest:finetune} shows that \ourmethod’s detection AUC remains stable throughout fine-tuning. Meanwhile, the downstream task accuracy saturates after 20--40 iterations (Figure~\ref{fig:finetune_dsacc} in the Appendix), indicating limited benefit from further updates. Overall, \ourmethod is robust to fine-tuning attacks.

\subsection{Additional analyses}
\begin{figure}[t]
    \centering
    \begin{subfigure}{0.3\textwidth}
        \centering
        \includegraphics[width=\textwidth]{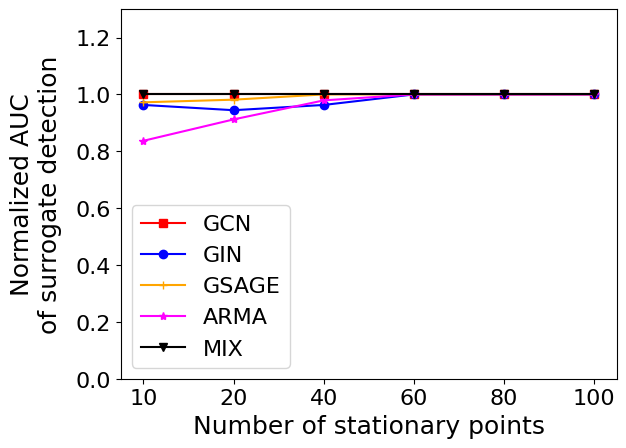}
        \caption{Varying \# of stationary points}
        \label{fig:numstatpts}
    \end{subfigure}
    \begin{subfigure}{0.3\textwidth}
        \centering
        \includegraphics[width=\textwidth]{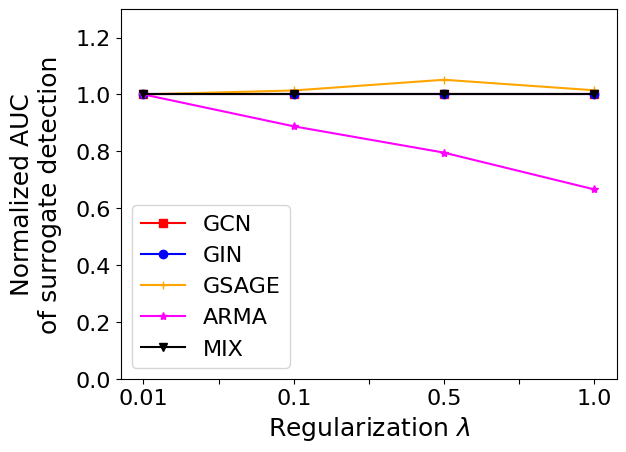}
        \caption{Effect of varying $\lambda$}
        \label{fig:varyLambda}
    \end{subfigure}
    \begin{subfigure}{0.25\textwidth}
        \centering
        \includegraphics[width=\textwidth, height=8.5em]{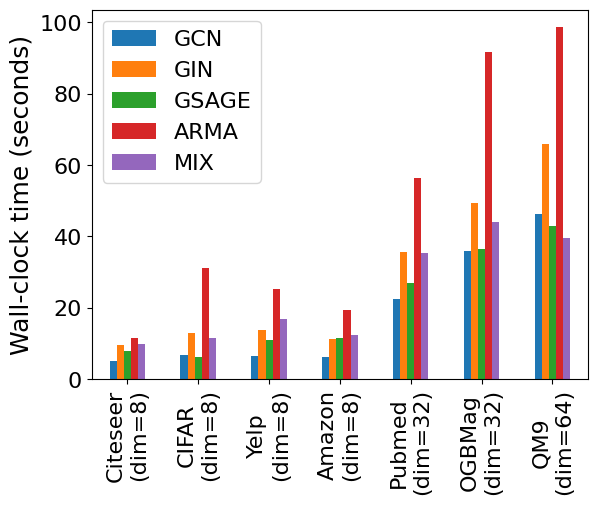}
        \caption{Time to find stat. point}
        \label{fig:time}
    \end{subfigure}
    \caption{\ourmethod needs only 20--40 stationary points to reach near-maximum AUC. AUC is largely insensitive to $\lambda$, except for ARMA where $\lambda=0.01$ is preferred.}
    \label{fig:additional}
    \vspace{-1em}
\end{figure}

\ourmethod needs only $20$--$40$ stationary points to reach near-maximum AUC across all GNN
architectures (Figure~\ref{fig:numstatpts}), consistent with Theorem~\ref{thm:special}.
% 's $e^{-2|\mathcal{T}|\gamma_M^2}$ failure probability decay. 
Surrogate detection is largely insensitive to
the regularization $\lambda$ in Equation~\ref{eq:sample}, except for ARMA where $\lambda=0.01$ is
preferred (Figure~\ref{fig:varyLambda}).
The wall-clock time per stationary point grows with embedding
dimension and is shortest for GCN, longest for ARMA (Figure~\ref{fig:time}).
% \begin{figure}
% \centering
%     \includegraphics[width=0.4\textwidth]{FIG/Time.png}
%     \caption{\textbf{ Wall-clock time for finding one stationary point.}}
%     \label{fig:time}
% \end{figure}
\section{Conclusions}
\label{sec:conc}

We introduced \ourmethod, a fingerprinting method for verifying the ownership of GNNs that output node embeddings. Our key insight is that stationary points of the embedding function are invariant under a broad class of transformations, making them a robust basis for fingerprinting. This allows \ourmethod to detect surrogate models even when the adversary changes architectures, embedding dimensions, or applies complex transformations to the embeddings.

We complement this idea with a practical sampling and testing procedure, along with theoretical guarantees for surrogate detection. Empirically, across 14 datasets and 5 GNN architectures, \ourmethod consistently achieves high detection accuracy and outperforms existing watermarking and fingerprinting methods. 
%As shown in Tables 1–2 and Figures 2–3, it remains robust under model extraction, embedding transformations, pruning, and fine-tuning attacks :contentReference[oaicite:0]{index=0}.

One direction of future work is to relax or validate assumptions about the transformation function and the distribution of stationary points.
Also, improving the efficiency of stationary point discovery, especially for very large graphs or higher-dimensional embeddings, would further enhance scalability.

\bibliographystyle{plainnat}
\bibliography{bibliography}

@inproceedings{pregip,
	address = {Toronto ON Canada},
	title = {{PreGIP}: {Watermarking} the {Pretraining} of {Graph} {Neural} {Networks} for {Deep} {IP} {Protection}},
	isbn = {979-8-4007-1454-2},
	shorttitle = {{PreGIP}},
	url = {https://dl.acm.org/doi/10.1145/3711896.3737089},
	doi = {10.1145/3711896.3737089},
	language = {en},
	urldate = {2025-10-07},
	booktitle = {Proceedings of the 31st {ACM} {SIGKDD} {Conference} on {Knowledge} {Discovery} and {Data} {Mining} {V}.2},
	publisher = {ACM},
	author = {Dai, Enyan and Lin, Minhua and Wang, Suhang},
	month = aug,
	year = {2025},
	pages = {415--426},
}

@inproceedings{adi_turning_2018,
	title = {Turning {Your} {Weakness} {Into} a {Strength}: {Watermarking} {Deep} {Neural} {Networks} by {Backdooring}},
	language = {en},
	booktitle = {{SEC}'18: {Proceedings} of the 27th {USENIX} {Conference} on {Security} {Symposium}},
	author = {Adi, Yossi and Baum, Carsten and Pinkas, Benny and Keshet, Joseph},
	year = {2018},
}

@inproceedings{xu_watermarking_2023,
	title = {Watermarking {Graph} {Neural} {Networks} based on {Backdoor} {Attacks}},
	url = {https://ieeexplore.ieee.org/document/10190545/},
	doi = {10.1109/EuroSP57164.2023.00072},
	urldate = {2025-10-13},
	booktitle = {2023 {IEEE} 8th {European} {Symposium} on {Security} and {Privacy} ({EuroS}\&{P})},
	author = {Xu, Jing and Koffas, Stefanos and Ersoy, Oğuzhan and Picek, Stjepan},
	month = jul,
	year = {2023},
	pages = {1179--1197},
}

@inproceedings{uchida_embedding_2017,
	address = {New York, NY, USA},
	series = {{ICMR} '17},
	title = {Embedding {Watermarks} into {Deep} {Neural} {Networks}},
	isbn = {978-1-4503-4701-3},
	url = {https://dl.acm.org/doi/10.1145/3078971.3078974},
	doi = {10.1145/3078971.3078974},
	urldate = {2025-10-13},
	booktitle = {Proceedings of the 2017 {ACM} on {International} {Conference} on {Multimedia} {Retrieval}},
	publisher = {Association for Computing Machinery},
	author = {Uchida, Yusuke and Nagai, Yuki and Sakazawa, Shigeyuki and Satoh, Shin'ichi},
	month = jun,
	year = {2017},
	pages = {269--277},
}

@inproceedings{chen_deepmarks_2019,
	address = {New York, NY, USA},
	series = {{ICMR} '19},
	title = {{DeepMarks}: {A} {Secure} {Fingerprinting} {Framework} for {Digital} {Rights} {Management} of {Deep} {Learning} {Models}},
	isbn = {978-1-4503-6765-3},
	shorttitle = {{DeepMarks}},
	url = {https://dl.acm.org/doi/10.1145/3323873.3325042},
	doi = {10.1145/3323873.3325042},
	urldate = {2025-10-13},
	booktitle = {Proceedings of the 2019 on {International} {Conference} on {Multimedia} {Retrieval}},
	publisher = {Association for Computing Machinery},
	author = {Chen, Huili and Rouhani, Bita Darvish and Fu, Cheng and Zhao, Jishen and Koushanfar, Farinaz},
	month = jun,
	year = {2019},
	pages = {105--113},
}

@inproceedings{darvish_rouhani_deepsigns_2019,
	address = {New York, NY, USA},
	series = {{ASPLOS} '19},
	title = {{DeepSigns}: {An} {End}-to-{End} {Watermarking} {Framework} for {Ownership} {Protection} of {Deep} {Neural} {Networks}},
	isbn = {978-1-4503-6240-5},
	shorttitle = {{DeepSigns}},
	url = {https://dl.acm.org/doi/10.1145/3297858.3304051},
	doi = {10.1145/3297858.3304051},
	urldate = {2025-10-13},
	booktitle = {Proceedings of the {Twenty}-{Fourth} {International} {Conference} on {Architectural} {Support} for {Programming} {Languages} and {Operating} {Systems}},
	publisher = {Association for Computing Machinery},
	author = {Darvish Rouhani, Bita and Chen, Huili and Koushanfar, Farinaz},
	month = apr,
	year = {2019},
	pages = {485--497},
}

@inproceedings{zhao_watermarking_2021,
	title = {Watermarking {Graph} {Neural} {Networks} by {Random} {Graphs}},
	url = {https://ieeexplore.ieee.org/document/9486352},
	doi = {10.1109/ISDFS52919.2021.9486352},
	urldate = {2025-10-13},
	booktitle = {2021 9th {International} {Symposium} on {Digital} {Forensics} and {Security} ({ISDFS})},
	author = {Zhao, Xiangyu and Wu, Hanzhou and Zhang, Xinpeng},
	month = jun,
	year = {2021},
	pages = {1--6},
}

@article{kwon_blindnet_2022,
	title = {{BlindNet} backdoor: {Attack} on deep neural network using blind watermark},
	volume = {81},
	issn = {1380-7501},
	shorttitle = {{BlindNet} backdoor},
	url = {https://doi.org/10.1007/s11042-021-11135-0},
	doi = {10.1007/s11042-021-11135-0},
	number = {5},
	urldate = {2025-10-14},
	journal = {Multimedia Tools Appl.},
	author = {Kwon, Hyun and Kim, Yongchul},
	month = feb,
	year = {2022},
	pages = {6217--6234},
}

@misc{zhao_recipe_2023,
	title = {A {Recipe} for {Watermarking} {Diffusion} {Models}},
	url = {http://arxiv.org/abs/2303.10137},
	doi = {10.48550/arXiv.2303.10137},
	urldate = {2025-10-14},
	publisher = {arXiv},
	author = {Zhao, Yunqing and Pang, Tianyu and Du, Chao and Yang, Xiao and Cheung, Ngai-Man and Lin, Min},
	month = oct,
	year = {2023},
	note = {arXiv:2303.10137 [cs]},
}

@article{dai_comprehensive_2024,
	title = {A {Comprehensive} {Survey} on {Trustworthy} {Graph} {Neural} {Networks}: {Privacy}, {Robustness}, {Fairness}, and {Explainability}},
	volume = {21},
	issn = {2731-538X, 2731-5398},
	shorttitle = {A {Comprehensive} {Survey} on {Trustworthy} {Graph} {Neural} {Networks}},
	url = {https://link.springer.com/10.1007/s11633-024-1510-8},
	doi = {10.1007/s11633-024-1510-8},
	language = {en},
	number = {6},
	urldate = {2025-10-14},
	journal = {Machine Intelligence Research},
	author = {Dai, Enyan and Zhao, Tianxiang and Zhu, Huaisheng and Xu, Junjie and Guo, Zhimeng and Liu, Hui and Tang, Jiliang and Wang, Suhang},
	month = dec,
	year = {2024},
	pages = {1011--1061},
}

@misc{grove,
	title = {{GrOVe}: {Ownership} {Verification} of {Graph} {Neural} {Networks} using {Embeddings}},
	shorttitle = {{GrOVe}},
	url = {http://arxiv.org/abs/2304.08566},
	doi = {10.48550/arXiv.2304.08566},
	urldate = {2025-10-14},
	publisher = {arXiv},
	author = {Waheed, Asim and Duddu, Vasisht and Asokan, N.},
	month = sep,
	year = {2023},
	note = {arXiv:2304.08566 [cs]},
}

@inproceedings{shen_model_2022,
	title = {Model {Stealing} {Attacks} {Against} {Inductive} {Graph} {Neural} {Networks}},
	isbn = {978-1-6654-1316-9},
	url = {https://www.computer.org/csdl/proceedings-article/sp/2022/131600b031/1FlQwoy12IU},
	doi = {10.1109/SP46214.2022.9833607},
	language = {English},
	urldate = {2025-10-14},
	publisher = {IEEE Computer Society},
	author = {Shen, Yun and He, Xinlei and Han, Yufei and Zhang, Yang},
	month = may,
	year = {2022},
	pages = {1175--1192},
}

@misc{lukas_sok_2021,
	title = {{SoK}: {How} {Robust} is {Image} {Classification} {Deep} {Neural} {Network} {Watermarking}? ({Extended} {Version})},
	shorttitle = {{SoK}},
	url = {http://arxiv.org/abs/2108.04974},
	doi = {10.48550/arXiv.2108.04974},
	urldate = {2025-10-15},
	publisher = {arXiv},
	author = {Lukas, Nils and Jiang, Edward and Li, Xinda and Kerschbaum, Florian},
	month = aug,
	year = {2021},
	note = {arXiv:2108.04974 [cs]},
}

@misc{defazio_adversarial_2019,
	title = {Adversarial {Model} {Extraction} on {Graph} {Neural} {Networks}},
	url = {http://arxiv.org/abs/1912.07721},
	doi = {10.48550/arXiv.1912.07721},
	urldate = {2026-01-24},
	publisher = {arXiv},
	author = {DeFazio, David and Ramesh, Arti},
	month = dec,
	year = {2019},
	note = {arXiv:1912.07721 [cs]},
}

@inproceedings{lee_defending_2019,
	title = {Defending {Against} {Neural} {Network} {Model} {Stealing} {Attacks} {Using} {Deceptive} {Perturbations}},
	url = {https://ieeexplore.ieee.org/document/8844598},
	doi = {10.1109/SPW.2019.00020},
	urldate = {2026-01-24},
	booktitle = {2019 {IEEE} {Security} and {Privacy} {Workshops} ({SPW})},
	author = {Lee, Taesung and Edwards, Benjamin and Molloy, Ian and Su, Dong},
	month = may,
	year = {2019},
	pages = {43--49},
}

@inproceedings{juuti_prada_2019,
	title = {{PRADA}: {Protecting} {Against} {DNN} {Model} {Stealing} {Attacks}},
	shorttitle = {{PRADA}},
	url = {https://ieeexplore.ieee.org/document/8806737},
	doi = {10.1109/EuroSP.2019.00044},
	urldate = {2026-01-24},
	booktitle = {2019 {IEEE} {European} {Symposium} on {Security} and {Privacy} ({EuroS}\&{P})},
	author = {Juuti, Mika and Szyller, Sebastian and Marchal, Samuel and Asokan, N.},
	month = jun,
	year = {2019},
	pages = {512--527},
}

@inproceedings{lukas_deep_2021,
	title = {Deep {Neural} {Network} {Fingerprinting} by {Conferrable} {Adversarial} {Examples}},
	language = {en},
	urldate = {2026-01-24},
	author = {Lukas, Nils and Zhang, Yuxuan and Kerschbaum, Florian},
	year = {2021},
	note = {arXiv:1912.00888 [cs]},
}

@inproceedings{peng_fingerprinting_2022,
	address = {New Orleans, LA, USA},
	title = {Fingerprinting {Deep} {Neural} {Networks} {Globally} via {Universal} {Adversarial} {Perturbations}},
	copyright = {https://doi.org/10.15223/policy-029},
	isbn = {978-1-6654-6946-3},
	url = {https://ieeexplore.ieee.org/document/9879052/},
	doi = {10.1109/CVPR52688.2022.01307},
	language = {en},
	urldate = {2026-01-24},
	booktitle = {2022 {IEEE}/{CVF} {Conference} on {Computer} {Vision} and {Pattern} {Recognition} ({CVPR})},
	publisher = {IEEE},
	author = {Peng, Zirui and Li, Shaofeng and Chen, Guoxing and Zhang, Cheng and Zhu, Haojin and Xue, Minhui},
	month = jun,
	year = {2022},
	pages = {13420--13429},
}

@inproceedings{papernot_practical_2017,
	address = {New York, NY, USA},
	series = {{ASIA} {CCS} '17},
	title = {Practical {Black}-{Box} {Attacks} against {Machine} {Learning}},
	isbn = {978-1-4503-4944-4},
	url = {https://dl.acm.org/doi/10.1145/3052973.3053009},
	doi = {10.1145/3052973.3053009},
	urldate = {2026-01-24},
	booktitle = {Proceedings of the 2017 {ACM} on {Asia} {Conference} on {Computer} and {Communications} {Security}},
	publisher = {Association for Computing Machinery},
	author = {Papernot, Nicolas and McDaniel, Patrick and Goodfellow, Ian and Jha, Somesh and Celik, Z. Berkay and Swami, Ananthram},
	month = apr,
	year = {2017},
	pages = {506--519},
}

@article{krishna_thieves_2020,
	title = {Thieves on {Sesame} {Street}! {Model} {Extraction} of {BERT}-based {APIs}},
	language = {en},
	author = {Krishna, Kalpesh and Tomar, Gaurav Singh and Parikh, Ankur P and Papernot, Nicolas and Iyyer, Mohit},
	year = {2020},
}

@inproceedings{maini_dataset_2021,
	title = {Dataset {Inference}: {Ownership} {Resolution} in {Machine} {Learning}},
	language = {en},
	author = {Maini, Pratyush and Yaghini, Mohammad and Papernot, Nicolas},
	year = {2021},
}

@inproceedings{peng_are_2023,
	address = {Toronto, Canada},
	title = {Are {You} {Copying} {My} {Model}? {Protecting} the {Copyright} of {Large} {Language} {Models} for {EaaS} via {Backdoor} {Watermark}},
	shorttitle = {Are {You} {Copying} {My} {Model}?},
	url = {https://aclanthology.org/2023.acl-long.423/},
	doi = {10.18653/v1/2023.acl-long.423},
	urldate = {2026-01-24},
	booktitle = {Proceedings of the 61st {Annual} {Meeting} of the {Association} for {Computational} {Linguistics} ({Volume} 1: {Long} {Papers})},
	publisher = {Association for Computational Linguistics},
	author = {Peng, Wenjun and Yi, Jingwei and Wu, Fangzhao and Wu, Shangxi and Bin Zhu, Bin and Lyu, Lingjuan and Jiao, Binxing and Xu, Tong and Sun, Guangzhong and Xie, Xing},
	editor = {Rogers, Anna and Boyd-Graber, Jordan and Okazaki, Naoaki},
	month = jul,
	year = {2023},
	pages = {7653--7668},
}

@inproceedings{zhang_gnnguard_2020,
	address = {Red Hook, NY, USA},
	series = {{NIPS} '20},
	title = {{GNNGUARD}: defending graph neural networks against adversarial attacks},
	isbn = {978-1-7138-2954-6},
	shorttitle = {{GNNGUARD}},
	url = {https://dl.acm.org/doi/10.5555/3495724.3496501},
	urldate = {2026-01-24},
	booktitle = {Proceedings of the 34th {International} {Conference} on {Neural} {Information} {Processing} {Systems}},
	publisher = {Curran Associates Inc.},
	author = {Zhang, Xiang and Zitnik, Marinka},
	month = dec,
	year = {2020},
	pages = {9263--9275},
}

@inproceedings{
deng2022garnet,
title={{GARNET}: Reduced-Rank Topology Learning for Robust and Scalable Graph Neural Networks},
author={Chenhui Deng and Xiuyu Li and Zhuo Feng and Zhiru Zhang},
booktitle={Learning on Graphs Conference},
year={2022},
url={https://openreview.net/forum?id=kvwWjYQtmw}
}

@inproceedings{zhang_inference_2022,
	title = {Inference {Attacks} {Against} {Graph} {Neural} {Networks}},
	language = {en},
	booktitle = {31st {USENIX} {Security} {Symposium}},
	author = {Zhang, Zhikun and Chen, Min and Backes, Michael and Shen, Yun and Zhang, Yang},
	year = {2022},
}

@inproceedings{wang_group_2022,
	address = {New York, NY, USA},
	series = {{CCS} '22},
	title = {Group {Property} {Inference} {Attacks} {Against} {Graph} {Neural} {Networks}},
	isbn = {978-1-4503-9450-5},
	url = {https://dl.acm.org/doi/10.1145/3548606.3560662},
	doi = {10.1145/3548606.3560662},
	urldate = {2026-01-24},
	booktitle = {Proceedings of the 2022 {ACM} {SIGSAC} {Conference} on {Computer} and {Communications} {Security}},
	publisher = {Association for Computing Machinery},
	author = {Wang, Xiuling and Wang, Wendy Hui},
	month = nov,
	year = {2022},
	pages = {2871--2884},
}

@inproceedings{kipf2017semi,
  title={Semi-Supervised Classification with Graph Convolutional Networks},
  author={Kipf, Thomas N. and Welling, Max},
  booktitle={International Conference on Learning Representations (ICLR)},
  year={2017}
}

@inproceedings{xu2019how,
  title={How Powerful are  Graph Neural Networks?},
  author={Xu, Keyulu and Hu, Weihua and Leskovec, Jure and Jegelka, Stefanie},
  booktitle={International Conference on Learning Representations (ICLR)},
  year={2019}
}

@inproceedings{hamilton2017inductive,
     author = {Hamilton, William L. and Ying, Rex and Leskovec, Jure},
     title = {Inductive Representation Learning on Large Graphs},
     booktitle = {NIPS},
     year = {2017}
}

@ARTICLE{bianchi2022conv,
  author={Bianchi, Filippo Maria and Grattarola, Daniele and Livi, Lorenzo and Alippi, Cesare},
  journal={IEEE Transactions on Pattern Analysis and Machine Intelligence}, 
  title={Graph Neural Networks With Convolutional ARMA Filters}, 
  year={2022},
  volume={44},
  number={7},
  pages={3496-3507},
  doi={10.1109/TPAMI.2021.3054830}}

@inproceedings{abu-el-haija_mixhop_2019,
	title = {{MixHop}: {Higher}-{Order} {Graph} {Convolutional} {Architectures}  via {Sparsified} {Neighborhood} {Mixing}},
	language = {en},
	author = {Abu-El-Haija, Sami and Perozzi, Bryan and Kapoor, Amol and Alipourfard, Nazanin and Lerman, Kristina and Harutyunyan, Hrayr and Steeg, Greg Ver and Galstyan, Aram},
	year = {2019},
}

@article{bennet2021nevergrad,
  author = {Bennet, Pauline and Doerr, Carola and Moreau, Antoine and Rapin, Jeremy and Teytaud, Fabien and Teytaud, Olivier},
  title = {Nevergrad: Black-Box Optimization Platform},
  journal = {ACM SIGEVOlution},
  volume = {14},
  number = {1},
  pages = {8--15},
  year = {2021},
  issn = {1931-8499},
  doi = {10.1145/3460310.3460312},
  publisher = {ACM},
  address = {New York, NY, USA},
  url = {https://doi.org}
}

@inproceedings{wu2024securing,
  title={Securing graph neural networks in MLaaS: A comprehensive realization of query-based integrity verification},
  author={Wu, Bang and Yuan, Xingliang and Wang, Shuo and Li, Qi and Xue, Minhui and Pan, Shirui},
  booktitle={2024 IEEE Symposium on Security and Privacy (SP)},
  pages={2534--2552},
  year={2024},
  organization={IEEE}
}

@inproceedings{8953972,
  author={He, Zecheng and Zhang, Tianwei and Lee, Ruby},
  booktitle={2019 IEEE/CVF Conference on Computer Vision and Pattern Recognition (CVPR)}, 
  title={Sensitive-Sample Fingerprinting of Deep Neural Networks}, 
  year={2019},
  volume={},
  number={},
  pages={4724-4732},
  doi={10.1109/CVPR.2019.00486}}

@article{kuttichira2022verification,
  title={Verification of integrity of deployed deep learning models using Bayesian optimization},
  author={Kuttichira, Deepthi Praveenlal and Gupta, Sunil and Nguyen, Dang and Rana, Santu and Venkatesh, Svetha},
  journal={Knowledge-based systems},
  volume={241},
  pages={108238},
  year={2022},
  publisher={Elsevier}
}

@inproceedings{wang2023publiccheck,
  title={Publiccheck: Public integrity verification for services of run-time deep models},
  author={Wang, Shuo and Abuadbba, Sharif and Agarwal, Sidharth and Moore, Kristen and Sun, Ruoxi and Xue, Minhui and Nepal, Surya and Camtepe, Seyit and Kanhere, Salil},
  booktitle={2023 IEEE Symposium on Security and Privacy (SP)},
  pages={1348--1365},
  year={2023},
  organization={IEEE}
}

%%%%%%%%%%%%%%%%%%%%%%%%%%%%%%%%%%%%%%%%%%%%%%%%%%%%%%%%%%%%%%%%%%%%%%%%%%%%%%%
%%%%%%%%%%%%%%%%%%%%%%%%%%%%%%%%%%%%%%%%%%%%%%%%%%%%%%%%%%%%%%%%%%%%%%%%%%%%%%%
% APPENDIX
%%%%%%%%%%%%%%%%%%%%%%%%%%%%%%%%%%%%%%%%%%%%%%%%%%%%%%%%%%%%%%%%%%%%%%%%%%%%%%%
%%%%%%%%%%%%%%%%%%%%%%%%%%%%%%%%%%%%%%%%%%%%%%%%%%%%%%%%%%%%%%%%%%%%%%%%%%%%%%%
\appendix
\section{Remarks and Proofs}
\label{sec:app:proofs}
% \begin{remark}
% The sup-norm requirement in Equation~\ref{eq:epsilon} can be relaxed to the supremum over a subset of graphs $(G,X)$ with a high probability under the data distribution.
% We keep the form of Equation~\ref{eq:epsilon} to simplify the exposition.
% \label{rem:supnorm}
% \end{remark}

\begin{remark}
    In Algorithm~\ref{alg:general}, we can use an alternative formula for $\hat{\beta}_Z$:
    \begin{align}
        \hat{\beta}_Z &= \text{Mean}\left(\{ \text{percentile of $q_Z(t)$ in $U$}\}_{t\in T}\right)\nonumber\\
        \text{where }U &= \{q_Z(t)\mid t\in R\}.\label{eq:percent}
    \end{align}
    % \txtblue{RN: Will it be confusing to use $Q$ here, but $\mathcal{Q}$ for the query tuple set? DC: Changed to U?}
    
    Here, $U$ contains samples of $q_Z(t)$ for $t\sim\mathcal{D}_{exp}$, supported on the positive real line.
    Now, consider a point $t\in T$.
    If $t$ is a stationary point for $Z$, $q_Z(t)\approx 0$, so the percentile of $q_Z(t)$ in $U$ is also small.
   Otherwise, $t$ is an idiosyncratic stationary point of $M$.
   In this situation, we expect $q_Z(t)$ to be distributed like a tuple drawn from $\mathcal{D}_{exp}$, i.e., with a percentile score distributed uniformly between $0$ and $100$.
    Thus, $\hat{\beta}_Z$ is small if $Z$ is a surrogate but high if $Z$ is independent.
   This mirrors the behavior of $\beta_Z$ (Equation~\ref{eq:betaz}) while avoiding division-related instability.   
   We use this variant in our experiments.
   \label{rem:actual_betaZ}
\end{remark}

\begin{lemma}[Closure under composition]\label{lem:closure}
If $f: \mathbb{R}^d \to \mathbb{R}^{d'}$ and $g: \mathbb{R}^{d''} \to \mathbb{R}^d$ each satisfy Assumption~\ref{A:invertible}, then $f \circ g$ also satisfies Assumption~\ref{A:invertible}.
\end{lemma}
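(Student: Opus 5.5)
The plan is to apply the chain rule to the directional derivative of $f\circ g$ and use the two nondegeneracy conditions one after the other. Fix a point $\bh\in\mathbb{R}^{d''}$ and a unit vector $\bw\in\mathbb{R}^{d''}$.

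First, I will show that $f\circ g$ is differentiable. Assumption~\ref{A:invertible} states differentiability of $f$ and $g$, which I read as (Fréchet) differentiability with Jacobians $J_f$ and $J_g$. The chain rule then shows that $f\circ g$ is differentiable at $\bh$, with Jacobian $J_f|_{g(\bh)}J_g|_{\bh}$. Consequently, the directional derivative defined by the limit in Assumption~\ref{A:invertible} exists and equals
\[
\left.\nabla_{\bw}(f\circ g)\right|_{\bh} \;=\; J_f\big|_{g(\bh)}\, J_g\big|_{\bh}\,\bw \;=\; J_f\big|_{g(\bh)}\,\bv, \qquad \bv := \left.\nabla_{\bw} g\right|_{\bh}.
\]

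Second, I will show this vector is nonzero. Because $g$ satisfies Assumption~\ref{A:invertible}, we have $\bv\neq\bm{0}$, so $\bu:=\bv/\|\bv\|$ is a unit vector in $\mathbb{R}^d$. Linearity of $J_f$ then gives $J_f|_{g(\bh)}\bv=\|\bv\|\,J_f|_{g(\bh)}\bu=\|\bv\|\,\left.\nabla_{\bu} f\right|_{g(\bh)}$. The factor $\|\bv\|$ is positive, and $\left.\nabla_{\bu} f\right|_{g(\bh)}\neq\bm{0}$ because $f$ satisfies Assumption~\ref{A:invertible} at the point $g(\bh)$ in the direction $\bu$. Hence the product is nonzero.

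The main obstacle is a technical subtlety in the first step. If ``differentiable'' in Assumption~\ref{A:invertible} only meant that directional (Gateaux) derivatives exist, the chain rule could fail, and the directional derivative of the composite need not be $J_f J_g\bw$. I will therefore state explicitly that differentiability is taken in the Fréchet sense. This is the natural reading, and it holds for all the example transformations (rotations, scalings, projections, translations). With that reading settled, the remaining steps are linear algebra.
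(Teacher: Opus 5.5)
Your proposal is correct and matches the paper's proof essentially line for line. Both use the chain rule to write $\nabla_{\bm{w}}(f\circ g)|_{\bm{h}} = J_f|_{g(\bm{h})}\bm{v}$ with $\bm{v}=\nabla_{\bm{w}} g|_{\bm{h}}\neq\bm{0}$, then rescale $\bm{v}$ to a unit vector to apply the assumption to $f$. Your remark that ``differentiable'' must mean Fréchet differentiability for the chain rule to hold is a useful clarification of something the paper uses implicitly when it writes Jacobians.
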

\begin{proof}[Proof of Lemma~\ref{lem:closure}]
Let $w$ be any unit vector and $h$ any point in the domain of $g$.
By the chain rule,
\[
  \nabla_w (f \circ g)\big|_h
  = J_f\big|_{g(h)} \cdot J_g\big|_h \, w
  = J_f\big|_{g(h)} \cdot \nabla_w g\big|_h.
\]
Since $g$ satisfies Assumption~\ref{A:invertible}, $\nabla_w g\big|_h \neq 0$.
Let $v := \nabla_w g\big|_h$; then $v \neq 0$, so $v / \|v\|$ is a unit vector.
Since $f$ satisfies Assumption~\ref{A:invertible},
\[
  J_f\big|_{g(h)} \, v
  = \|v\| \cdot \nabla_{v/\|v\|} f\big|_{g(h)}
  \neq 0.
\]
Hence $\nabla_w (f \circ g)\big|_h \neq 0$, so $f \circ g$ satisfies Assumption~\ref{A:invertible}.
\end{proof}
\begin{proof}[Proof of Lemma~\ref{lem:stat_invariant}]
Let $J^\prime$, $J^\phi$, and $J$ be the Jacobians of the functions $\bh_i^\prime(X)$, $\phi(\bh)$, and $\bh(X)$ respectively (these exist by Assumptions~\ref{A:bhi} and~\ref{A:invertible}). 
By the chain rule,
\begin{align*}
\nabla_{\bw}\bh_i^\prime(X) &= \left.J^\prime\right|_{X}\bw \\
    &= \left.J^\phi\right|_{\bh_i(X)} \left.J\right|_{X} \bw = \left.J^\phi\right|_{\bh_i(X)} \nabla_{\bw}\bh_i(X),
\end{align*}
Hence, $\nabla_{\bw}\bh_i(X)=0 \Rightarrow \nabla_{\bw}\bh_i^\prime(X)=0$.
Conversely, suppose $\nabla_{\bw}\bh_i^\prime(X)=0$.
Then, $\left.J^\phi\right|_{\bh_i(X)} \bv = \bm{0}$, where $\bv:=\nabla_{\bw}\bh_i(X)$.
If $\bv\neq \bm{0}$, then $\bm{0}=\left.J^\phi\right|_{\bh_i(X)} \bv = \|\bv\|\cdot \left.\nabla_{\bv/\|\bv\|}\phi\right|_{\bh_i(X)}$, which is impossible by Assumption~\ref{A:invertible}.
Hence, $\nabla_{\bw}\bh_i(X)=\bv=\bm{0}$.
\end{proof}

\begin{proof}[Proof of Theorem~\ref{thm:special}]
By Lemma~\ref{lem:stat_invariant}, Algorithm~\ref{alg:special} fails iff $Z$ is independent but $T\subseteq \mathcal{S}(Z)$.
Suppose $Z$ is an independent model.
Each $t\in T$ is drawn independently using measure $\mu_M$.
We have, $E_{t\sim\mu_M}[\mathds{1}_{t\in\mathcal{S}(Z)}]=P(t\in \mathcal{S}(Z))\leq 1-\gamma_M$ by Assumption~\ref{A:idiosyncratic}.
Algorithm~\ref{alg:special} fails with probability
\begin{align*}
    P\left(\frac{|T\cap\mathcal{S}(Z)|}{|T|}=1\right) &\leq (1-\gamma_M)^{|T|} \leq \exp\left(-\gamma_M\cdot T\right).
    % &\quad\leq P\left(\frac{|T\cap\mathcal{S}(Z)|}{|T|}\geq E_{t\sim\mu_M}[\mathds{1}_{t\in\mathcal{S}(Z)}] + \gamma_M\right)\\
    % &\quad\leq \exp\left(-2\gamma_M^2 |T|\right),
\end{align*}
%by Hoeffding's bound.
\end{proof}

\begin{proof}[Proof of Lemma~\ref{lem:betam}]
%By Equation~\ref{eq:q}, 
For any feasible tuple $t=(G,X,i,\bw)$, 
\begin{align*}
    q_M(t) &=\frac{\|\bh_i^{(\delta\bw)}(X) - \bh_i(X)\|}{\|\bh_i(X)\|}
    =\frac{\delta\cdot \|\nabla_{\bw}\bh_i\| + O(\delta^2)}{\|\bh_i\|}.
\end{align*}
For $t\in\mathcal{S}(M)$, $\|\nabla_{\bw}\bh_i\|=0$ so $q_M(t)=O(\delta^2)$ (since $\|\bh_i\|\geq \ualpha_M$ by Assumption~\ref{A:bhi}).
Now, $u_M:=E_{t\sim\mathcal{D}_{exp}}\|\nabla_{\bw}\bh_i\|>0$ because the GNN's outputs must be affected by changing inputs for at least some graphs.
So, $E_{t\sim\mathcal{D}_{exp}}\geq \frac{\delta u_M + O(\delta^2)}{\oalpha_M}$.
Hence,
\begin{align*}
    \beta_M\leq \frac{O(\delta^2)}{\delta u_M/\oalpha_M + O(\delta^2)} = O(\delta).
\end{align*}
\end{proof}

\begin{theorem}
For any model $F$, $\beta_F$ is unchanged under any combination of 
\begin{align*}
    \text{rotation: } & \bh\to O\bh & (O \text{ orthonormal})\\
    \text{scaling: } & \bh\to c\bh & (c\neq 0)\\
    \text{up-projection: } & \bh\to [\bh^T, \underbrace{0, \ldots, 0}_{\ell \text{times}}]^T & (\text{arbitrary } \ell).
\end{align*}
%and also for any composition of these transformations.
\label{thm:betaZ}
\vspace{-1em}
\end{theorem}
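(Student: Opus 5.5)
The plan is to show that each of the three elementary maps multiplies $q_F(t)$ by a constant that does not depend on $t$. Such a constant appears in both the numerator and the denominator of $\beta_F$ in Equation~\ref{eq:betaz}, so it cancels. Compositions are then handled by induction on the number of maps applied.

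Write the transformed model $F'$ as having embeddings $\bh_i'(X)=\psi(\bh_i(X))$ for a single elementary map $\psi$. Note that $\bh_i'^{(\delta\bw)}(X)=\psi(\bh_i^{(\delta\bw)}(X))$, because the perturbation in Definition~\ref{defn:stationary} acts on the input features and $\psi$ acts only on the output. All three maps are linear, so
\begin{align*}
\bh_i'^{(\delta\bw)}(X)-\bh_i'(X)=\psi\bigl(\bh_i^{(\delta\bw)}(X)-\bh_i(X)\bigr).
\end{align*}
We then check that $\|\psi(\bv)\|=\kappa_\psi\|\bv\|$ for every $\bv$, with $\kappa_\psi$ independent of $\bv$:
\begin{itemize}
\item for a rotation, $\kappa_\psi=1$, since $\|O\bv\|^2=\bv^TO^TO\bv=\|\bv\|^2$;
\item for scaling by $c$, $\kappa_\psi=|c|$;
\item for up-projection, $\kappa_\psi=1$, since padding with zeros leaves the Euclidean norm unchanged.
\end{itemize}
Applying this norm identity to both the numerator and the denominator of $q_{F'}(t)$ gives $q_{F'}(t)=\kappa_\psi\|\Delta\|/(\kappa_\psi\|\bh_i\|)=q_F(t)$ for every tuple $t$. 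So $q$ is in fact pointwise invariant, which is stronger than what is needed.

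The sampling distributions $\mu_M$ and $\mathcal{D}_{exp}$ are defined through the victim $M$ and the data, not through $F$, so they are the same on both sides. Both expectations in $\beta_F$ are therefore unchanged, and hence $\beta_{F'}=\beta_F$. For any finite composition $\psi_k\circ\cdots\circ\psi_1$, we apply this argument one map at a time. Each intermediate model is again a GNN whose $q$ coincides with the previous one, so induction on $k$ gives the claim. Alternatively, a composition of norm-scaling linear maps is itself a norm-scaling linear map with $\kappa=\prod_j\kappa_{\psi_j}$.

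There is no real obstacle here. The only points needing care are two. First, $\|\bh_i\|\neq 0$ is needed so that $q$ is well defined; this follows from Assumption~\ref{A:bhi} and is preserved because $\kappa_\psi>0$, given $c\neq 0$. Second, one must note that $\beta_F$ depends on $F$ only through $q_F$. It is also worth remarking that the same argument fails for general affine or nonlinear maps such as translations, which is consistent with Table~\ref{tab:advTransforms}.
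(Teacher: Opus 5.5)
Your proposal is correct and follows the paper's approach. The paper's proof is the single observation that $q_F(t)$ is invariant under these maps, hence so is $\beta_F$. Your argument spells that out: each map is linear and scales every norm by the same factor, and $\mu_M$ and $\mathcal{D}_{exp}$ do not depend on $F$.
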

\begin{proof}[Proof of Theorem~\ref{thm:betaZ}]
    $q_F(t)$ is clearly invariant under these transformations, so $\beta_F$ is too.
\end{proof}

\begin{proof}[Proof of Lemma~\ref{lem:betamprime}]
    \begin{align*}
        \|&\phi(\hbh_i^{(\delta\bw)}) - \phi(\hbh_i)\| \\
        &\leq C\|\hbh_i^{(\delta\bw)} - \hbh_i\|\\
        &\leq C\left[ \|\hbh_i^{(\delta\bw)}-\bh_i^{(\delta\bw)}\| + \|\bh_i^{(\delta\bw)}-\bh_i\| + \|\hbh_i-\bh_i\|\right]\\
        &\leq C\left[ 2\epsilon + \delta\|\nabla_{\bw}\bh_i\| + O(\delta^2)\right]\\
        &\leq C\left[ \delta\|\nabla_{\bw}\bh_i\| + O(\epsilon+\delta^2)\right].
    \end{align*}
    Also,
    \begin{align*}
        \|\phi(\hbh_i)\| \geq c\|\hbh_i\| \geq c(\|\bh_i\|-\epsilon) \geq c(\ualpha_M-\epsilon) \geq c\ualpha_M/2
    \end{align*}
    for $\epsilon$ small enough, where the first inequality follows from Assumption~\ref{A:phistrong}.
    Hence, 
    \begin{align*}
        q_{M^\prime}(t) &\leq \frac{C\left[ \delta\|\nabla_{\bw}\bh_i\| + O(\epsilon+\delta^2)\right]}{c\ualpha_M/2}.\\
        \text{Similarly, }
        q_{M^\prime}(t) &\geq \frac{c\left[ \delta\|\nabla_{\bw}\bh_i\| + O(\epsilon+\delta^2)\right]}{2C\oalpha_M}.\\
    \end{align*}
    Define $u_M:=E_{t\sim\mathcal{D}_{exp}}\|\nabla_{\bw}\bh_i\|>0$ as in the proof of Lemma~\ref{lem:betam}.
    Then, 
    \begin{align*}
        E_{t\sim\mu_M}\left[q_{M^\prime}(t)\right] &\leq \frac{C}{c\ualpha/2}\cdot O(\epsilon+\delta^2) & \\
        E_{t\sim\mathcal{D}_{exp}}\left[q_{M^\prime}(t)\right] &\geq \frac{c(\delta\cdot u_M + O(\epsilon+\delta^2))}{2C\oalpha_M} &\\
        \text{So, }
        \beta_{M^\prime} &\leq \left(\frac{C}{c}\right)^2\cdot O\left( \frac{(\epsilon+\delta^2)}{\delta}\right) & (\text{since }\epsilon=o(\delta))\\
        &\leq \left(\frac{C}{c}\right)^2 O\left(\max(\epsilon/\delta, \delta)\right). &
    \end{align*}
\end{proof}

\begin{proof}[Proof of Lemma~\ref{lem:betai}]
    As in the proof of Lemma~\ref{lem:betam}, we define $u_I:=E_{t\sim\mathcal{D}_{exp}}\|\nabla_{\bw}\bh_i(G, X; I)\|>0$ and find that  
    \begin{align}
        E_{t\sim\mathcal{D}_{exp}}[q_I(t)] &\geq \frac{\delta u_I + O(\delta^2)}{\oalpha_I} = \Theta(\delta).
        \label{eq:dexpqi}
    \end{align}
    For $t\in\mathcal{S}(I)$, we have
    \begin{align}
        q_I(t) &\geq \frac{\delta\|\nabla_{\bw}\bh_i(G,X; I)\|+O(\delta^2)}{\oalpha_I} = O(\delta^2).
        \label{eq:qitinsi}
    \end{align}
    Now,
    \begin{align*}
        E_{t\sim\mu_M}\left[ q_I(t) \right]
        &= P(t\in\mathcal{S}(I))\cdot E_t\left[ q_I(t)\mid t\in\mathcal{S}(I) \right]\\
        &+ P(t\notin\mathcal{S}(I))\cdot E_t\left[ q_I(t)\mid t\notin\mathcal{S}(I) \right]\\
        &=P(\mathcal{S}(M)\cap\mathcal{S}(I))\cdot O(\delta^2)\\
        &+P(\mathcal{S}(M)\setminus\mathcal{S}(I))\cdot E_{t\sim\mathcal{D}_{exp}}\left[ q_I(t)\right]\\
        &\geq \gamma_M\cdot E_{t\sim\mathcal{D}_{exp}}\left[ q_I(t)\right] + O(\delta^2),\\
        \text{So, }\beta_I 
        &\geq \frac{\gamma_M\cdot E_{t\sim\mathcal{D}_{exp}}\left[ q_I(t)\right] + O(\delta^2)}{E_{t\sim\mathcal{D}_{exp}}\left[ q_I(t)\right]}\\
        &\geq \gamma_M + O(\delta),
    \end{align*}
    where we used Equation~\ref{eq:qitinsi} and Assumption~\ref{A:q_idio} in the second equality, and Equation~\ref{eq:dexpqi} in the last step.
\end{proof}

\begin{proof}[Proof of Theorem~\ref{thm:general}]
    Since $\bh_i$ is upper- and lower-bounded (Assumption~\ref{A:bhi}), so is $q_Z(t)$.
    %(Equation~\ref{eq:q}).
    Then, the numerator and denominator of $\hat{\beta}_Z$ converge to their counterparts for $\beta_Z$ by Hoeffding bounds, and the result follows.
\end{proof}

\begin{table*}[ht]
\centering
\small
\begin{tabular}{l|ccccc|ccccc}
\toprule
& \multicolumn{5}{c|}{\textbf{\ourmethod}} & \multicolumn{5}{c}{\textbf{PreGIP}}\\
\textbf{Dataset} 
    & \textbf{GCN} & \textbf{GIN} & \textbf{GSage} & \textbf{ARMA} & \textbf{MixHop}
    & \textbf{GCN} & \textbf{GIN} & \textbf{GSage} & \textbf{ARMA} & \textbf{MixHop} \\
\midrule
Citeseer
         & 1.00 & 1.00 & 1.00 & 1.00 & 1.00
         & 0.60 & \hl{0.40} & 1.00 & 0.80 & 0.95
\\
OGBMag
         & 1.00 & 1.00 & 1.00 & 1.00 & 1.00
         & 1.00 & \hl{0.00} & 0.60 & 0.95 & 0.85
\\
HIV
         & 1.00 & 1.00 & 0.64 & 0.78 & 1.00
         & 0.50 & 0.70 & 0.90 & 0.60 & 0.80
\\
Yelp
         & 1.00 & 1.00 & 1.00 & 0.94 & 1.00
         & 0.90 & \hl{0.00} & 0.55 & 1.00 & 0.90
\\
MNIST
         & 1.00 & 1.00 & 1.00 & 0.78 & 1.00
         & 1.00 & \hl{0.00} & 0.60 & 0.85 & \hl{0.15}
\\
BBBP
         & 1.00 & 1.00 & 0.93 & 1.00 & 1.00
         & 0.95 & \hl{0.10} & 0.70 & \hl{0.00} & 1.00
\\
Pubmed
         & 1.00 & 1.00 & 0.79 & 0.83 & 0.93
         & 0.80 & \hl{0.25} & \hl{0.30} & 0.70 & 0.90
\\
QM9
         & 1.00 & 1.00 & 1.00 & 0.78 & 1.00
         & 0.85 & \hl{0.00} & 1.00 & \hl{0.00} & 0.60
\\
Fin
         & 1.00 & 0.86 & 0.86 & 0.78 & 1.00
         & 1.00 & 1.00 & 1.00 & 1.00 & 1.00
\\
Amazon
         & 1.00 & 0.64 & 1.00 & 0.67 & 1.00
         & 1.00 & \hl{0.10} & 0.75 & 0.75 & 1.00
\\
Coco
         & 1.00 & 1.00 & 0.93 & 0.94 & 0.93
         & 0.80 & \hl{0.00} & 0.80 & 0.50 & 0.50
\\
DBLP
         & 1.00 & 1.00 & 0.64 & 1.00 & 1.00
         & \hl{0.30} & \hl{0.00} & \hl{0.20} & 0.80 & 0.60
\\
CIFAR
         & 1.00 & 1.00 & 1.00 & 0.94 & 1.00
         & \hl{0.00} & \hl{0.00} & 0.50 & \hl{0.10} & \hl{0.40}
\\
Computers
         & 1.00 & 1.00 & 1.00 & 1.00 & 1.00
         & 1.00 & 1.00 & 0.55 & 1.00 & 1.00
\\
\midrule
{\bf Average}
         & {\bf 1.00} & {\bf 0.96} & {\bf 0.91} & {\bf 0.89} & {\bf 0.99}
         & 0.76 & \hl{0.25} & 0.68 & 0.65 & 0.76
\\

\bottomrule
\end{tabular}
\caption{{\em AUC for classifying surrogate versus independent models under model extraction attack:}
This is similar to Table~\ref{tab:dataset-model-results}, except that we use cosine similarity for PreGIP instead of distances.
\ourmethod is comprehensively better, especially for the GIN architecture.
}
\label{tab:dataset-model-results2}
\end{table*}
\section{Experimental Details}
\label{sec:app:details}

Table~\ref{tab:dataset} shows the characteristics of our 14 datasets.
Table~\ref{tab:dataset-model-results2} compares the surrogate detection AUC of \ourmethod and a version of PreGIP where the scores are based on cosine-similarity rather than distances between embeddings of pairs of watermarks graphs.
PreGIP with cosine similarity shows worse results than using distances, so we focus on the latter in the main paper (Table~\ref{tab:dataset-model-results}).

Figure~\ref{fig:finetune_dsacc} shows the normalized accuracy of the surrogate model on a downstream task after being fine-tuned for that task.
We report the trimmed means over 5 datasets.
All models stabilize within 20-40 epochs, with the greatest benefits occurring for GraphSage and MixHop.
Note that \ourmethod's surrogate detection accuracy remains robust throughout fine-tuning (Figure~\ref{fig:allTheRest:finetune}).

Figure~\ref{fig:numstatpts} shows \ourmethod's normalized AUC when we vary the number of stationary points used in Algorithm~\ref{alg:general} (parameter named~$\ell$).
We need only $20-40$ points to achieve a high surrogate detection accuracy.

Figure~\ref{fig:cosine} shows the distribution of cosine similarities between the node features of two stationary points of the same size.
The distribution is mostly concentrated around $0$, showing that most stationary points are distinct from each other.
Hence, Algorithm~\ref{alg:general} picks distinct graphs for its queries.

Figure~\ref{fig:indep_psc} shows the distribution of \ourmethod's score $\hat{\beta}_Z$ for all independent models (Equation~\ref{eq:percent}).
Suppose the stationary points of the victim model are not in any way special points for the independent models.
Then, we would expect the percentile scores to be spread uniformly between $0$ to $100$, with a mean around $\hat{\beta}_Z=50$.
We see that for most datasets and GNN architectures, the $\hat{\beta}_Z$ scores are indeed around $50$.
This provides some evidence for our Assumptions~\ref{A:idiosyncratic} and~\ref{A:q_idio}.
Table~\ref{tab:hyperparams} lists the GNN architectures and configurations used in our experiments.
All seeds are fixed for reproducibility.
We used standard configurations from the literature and did not tune hyperparameters for highest accuracy; our goal was to evaluate whether CopyCop works across a diverse range of trained models.

\textbf{Compute resources:} All experiments were run on a single workstation with an Intel Core i9-10980XE CPU (18 cores / 36 threads), 256 GB of RAM, and 2$\times$ NVIDIA RTX 3090 GPUs (24 GB each). The benchmark spans 14 graph datasets and 5 GNN architectures (GCN, GIN, GraphSAGE, ARMA, MixHop) per dataset, with stolen and independent copies for each base model.
All (dataset, architecture) pairs are independent of one another, so we ran them as independent processes. 
%GPUs were used only for training the base, stolen, and indepenent GNNs but the rest of our pipeline: the per-node stationary-point search, the line-search, and the embedding/gradient/delta evaluations across all candidate models all run on CPU. 
Wall-clock per-stationary-point timings reported in Figure~\ref{fig:time} is per-process (not summed across parallel workers).
 
\begin{table}[H]
\small
\centering

\begin{tabular}{ll}
\hline
\textbf{Architecture} & \textbf{Configuration} \\
\hline
MixHop    & (MixHopConv + ReLU) $\times 2$ + Linear \\
ARMA      & (ARMAConv + ReLU + Dropout) $\times 3$ + Linear \\
GraphSAGE & (SAGEConv + ReLU + Dropout) $\times 2$ + SAGEConv \\
GCN       & (GCNConv + ReLU) $\times 3$ + GCNConv \\
GIN       & (GINConv + ReLU) $\times 2$ + GINConv \\
\hline
\end{tabular}
\vspace{1em}
\caption{{\em GNN model configurations:}
All GNNs output node embeddings, whose accuracy is tested on downstream tasks.
For node-level tasks, a final linear layer produces the output.
For graph-level tasks, we append GlobalMeanPooling + Dropout + Linear.}
\label{tab:hyperparams}
\end{table}

\begin{figure}[h]
    \centering
    \includegraphics[width=0.5\columnwidth]{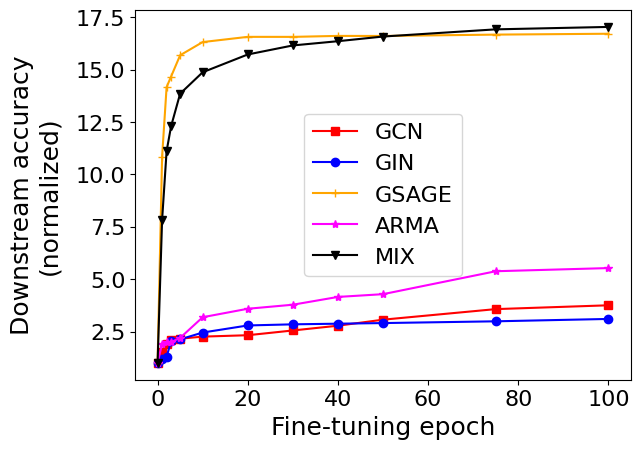}
    \caption{{\em Accuracy of downstream task under fine-tuning:}
    The accuracy is normalized by the accuracy before fine-tuning.
    We report the trimmed mean of the normalized accuracy across 5 datasets.
    The accuracy stabilizes after 20-40 epochs.}
    \label{fig:finetune_dsacc}
\end{figure}

\begin{figure}[h]
    \centering
    \includegraphics[width=0.5\columnwidth]{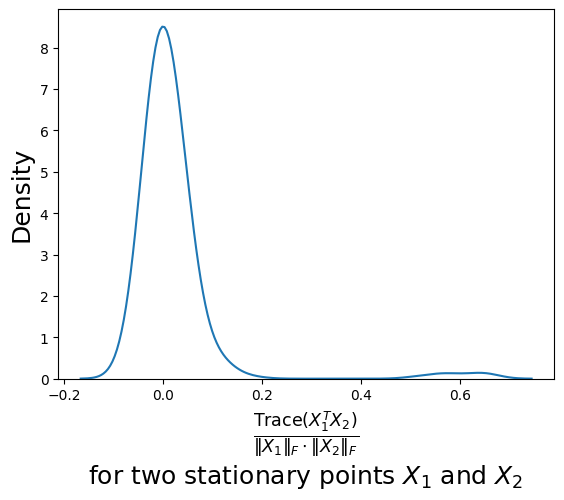}
    \caption{{\em Cosine similarity of stationary points:}
    We picked 25 stationary points for 2-node graphs constructed using Equation~\ref{eq:sample} for the Citeseer dataset.
    For every pair of graphs, we computed the cosine similarity between their node features after flattening the feature matrices into vectors.
    We show the distribution of cosine similarity.
    This shows that most stationary points are nearly orthogonal to each other, showing that our sampling approach in Algorithm~\ref{alg:general} picks distinct stationary points.    
    }
    \label{fig:cosine}
\end{figure}
\begin{figure}
    \centering
    \includegraphics[width=0.5\columnwidth]{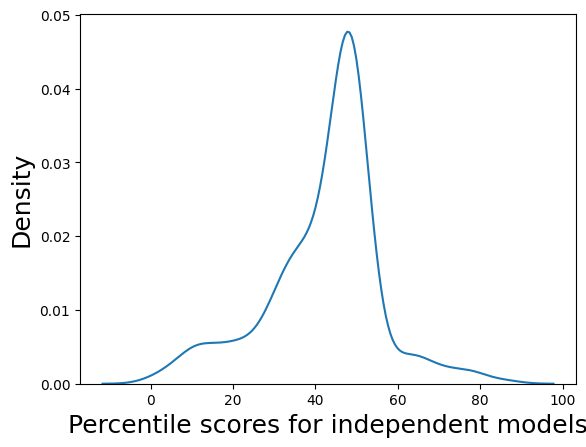}
    \caption{{\em \ourmethod's score distribution for independent models:}
    We show the distribution of $\hat{\beta}_Z$ (Eq.~\ref{eq:percent}) for all independent models across all GNN architectures and datasets.
    Recall that $\hat{\beta}_Z$ is an average of percentile scores.
    The distribution peaks around $50\%$, which is exactly the expected value if the stationary points of the victim model were ``random'' points for the independent models (see Assumptions~\ref{A:idiosyncratic} and~\ref{A:q_idio}).
    }
    \label{fig:indep_psc}
\end{figure}

\begin{table}[H]
\small
\centering
\footnotesize
\begin{tabular}{l@{\hspace{5pt}}c@{\hspace{5pt}}c@{\hspace{5pt}}c@{\hspace{5pt}}c@{\hspace{5pt}}c}
\toprule
& & & \textbf{embed.} & \textbf{integer} & \textbf{avg. nodes}\\
\textbf{Dataset} & \textbf{\#graphs} & \textbf{\#features} & \textbf{dim.} & \textbf{features?} & \textbf{per graph}\\
\midrule
Computers & 5,000 & 767 & 8 & Yes& 31\\
QM9 & 30,000 & 3 & 64 & & 10\\
Amazon & 10,000 & 300 & 8 & & 83\\
Coco & 30,000 & 14 & 8 & & 478\\
PubMed & 8,000 & 500 & 32 & & 6\\
BBBP & 800 & 9 & 8 & Yes & 23\\
HIV & 15,000 & 9 & 8 & Yes & 17\\
OGBMag & 8,000 & 128 & 32 & & 101\\
MNIST & 5,000 & 2 & 8 & & 70\\
Yelp & 5,000 & 300 & 8 & & 60\\
CIFAR & 5,000 & 2 & 8 & & 118\\
DBLP & 8,000 & 1,639 & 8 & & 24\\
Citeseer & 2,000 & 602 & 8 & & 8\\
Fin & 4,000 & 17 & 8 & & 600\\
\bottomrule
\end{tabular}
\caption{{\em Dataset characteristics}}
\label{tab:dataset}
\end{table}

% \newpage
% \input{checklist}

\end{document}